\definecolor{mydarkred}{rgb}{0.6,0,0}
\definecolor{mydarkgreen}{rgb}{0,0.6,0}
\newcommand{\cmark}{\ding{51}}%
\newcommand{\xmark}{\ding{55}}%
\newtheorem{theorem}{Theorem}
\newtheorem{lemma}{Lemma}
\newcommand{\squishlist}{
\begin{list}{{{\small{$\bullet$}}}}
{\setlength{\itemsep}{3pt}      \setlength{\parsep}{1pt}
\setlength{\topsep}{1pt}       \setlength{\partopsep}{0pt}
\setlength{\leftmargin}{1em} \setlength{\labelwidth}{1em}
\setlength{\labelsep}{0.5em} } }
\newcommand{\squishend}{  \end{list}  }
\newcommand*{\rom}[1]{\expandafter\@slowromancap\romannumeral #1@}
\title{LayerMatch: Do Pseudo-labels Benefit All Layers?}
\author{
\textbf{Chaoqi Liang}$^{1,2}$, \textbf{Guanglei Yang}$^1$, \textbf{Lifeng Qiao}$^{2,3}$, \textbf{Zitong Huang}$^1$,\\ 
\textbf{Hongliang Yan}$^2$, \textbf{Yunchao Wei}$^4$, \textbf{Wangmeng Zuo}$^1$ \\
\small{$^1$Harbin Institute of Technology,\, $^2$Shanghai AI Laboratory,\, $^3$Shanghai Jiao Tong University,} \\
\small{$^4$Beijing Jiaotong University}\\
\small{\texttt{lcqfacai@outlook.com}}
}
\begin{document}

\maketitle

\begin{abstract}
Deep neural networks have achieved remarkable performance across various tasks when supplied with large-scale labeled data. However, the collection of labeled data can be time-consuming and labor-intensive. Semi-supervised learning (SSL), particularly through pseudo-labeling algorithms that iteratively assign pseudo-labels for self-training, offers a promising solution to mitigate the dependency of labeled data. Previous research generally applies a uniform pseudo-labeling strategy across all model layers, assuming that pseudo-labels exert uniform influence throughout. Contrasting this, our theoretical analysis and empirical experiment demonstrate feature extraction layer and linear classification layer have distinct learning behaviors in response to pseudo-labels. Based on these insights, we develop two layer-specific pseudo-label strategies, termed Grad-ReLU and Avg-Clustering. Grad-ReLU mitigates the impact of noisy pseudo-labels by removing the gradient detrimental effects of pseudo-labels in the linear classification layer. Avg-Clustering accelerates the convergence of feature extraction layer towards stable clustering centers by integrating consistent outputs. Our approach, LayerMatch, which integrates these two strategies, can avoid the severe interference of noisy pseudo-labels in the linear classification layer while accelerating the clustering capability of the feature extraction layer. Through extensive experimentation, our approach consistently demonstrates exceptional performance on standard semi-supervised learning benchmarks, achieving a significant improvement of 10.38\% over baseline method and a 2.44\% increase compared to state-of-the-art methods.
\end{abstract}

\section{Introduction}
Deep learning has demonstrated remarkable performance across various domains, benefiting from extensive labeled datasets~\cite{he2016deep,vaswani2017attention,dong2018speech}.
However, acquiring such comprehensive labeled datasets poses substantial challenges due to the high costs and labor-intensive processes.
In this context, semi-supervised learning (SSL) emerges as a pivotal approach, enabling models to extract valuable information from a wealth of unlabeled data, thereby reducing the dependency on labeled data \cite{xiaojin2008semi,zhu2009introduction,sohn2020fixmatch}. 
Within SSL, two prominent techniques have shown significant effectiveness: pseudo-labeling and consistency regularization. Specifically, pseudo-labeling~\cite{lee2013pseudo, xie2020self} leverages labeled data to assign pseudo-labels to unlabeled data. This approach allows the model to learn from a larger dataset without requiring extensive labeling efforts. In addition, consistency regularization~\cite{bachman2014learning, sajjadi2016regularization, laine2016temporal} ensures that the model produces consistent outputs for the same input, even when the input undergoes various data augmentations. Consequently, this approach improves the model's generalization ability.

Recently, integrating pseudo-labeling with consistency regularization has emerged as a predominant approach. A prime example is FixMatch~\cite{sohn2020fixmatch}, which uses weak and strong data augmentation to combine pseudo-labeling and consistency regularization.
In FixMatch, pseudo-labels are generated by selecting predictions from weakly augmented inputs that exceed a confidence threshold (i.e., $\tau=0.95$). It then calculates the cross-entropy loss between the strongly data augmentation of these inputs and their corresponding pseudo-labels as the consistency regularization.
Despite the inherent inaccuracies in pseudo-labels, FixMatch lets the deep models learn effectively and improve performance.
Figure~\ref{fig1} illustrates the training process in a classic semi-supervised learning (SSL) setup. FixMatch (yellow curve) significantly outperforms the only supervised learning method using labeled data (red dashed line), highlighting the effectiveness of pseudo-labeling combined with consistency regularization in leveraging unlabeled data to enhance model performance.

\begin{wrapfigure}{R}
{0.55\columnwidth}
\centering
\vspace{-6mm}
\includegraphics[width=\linewidth]{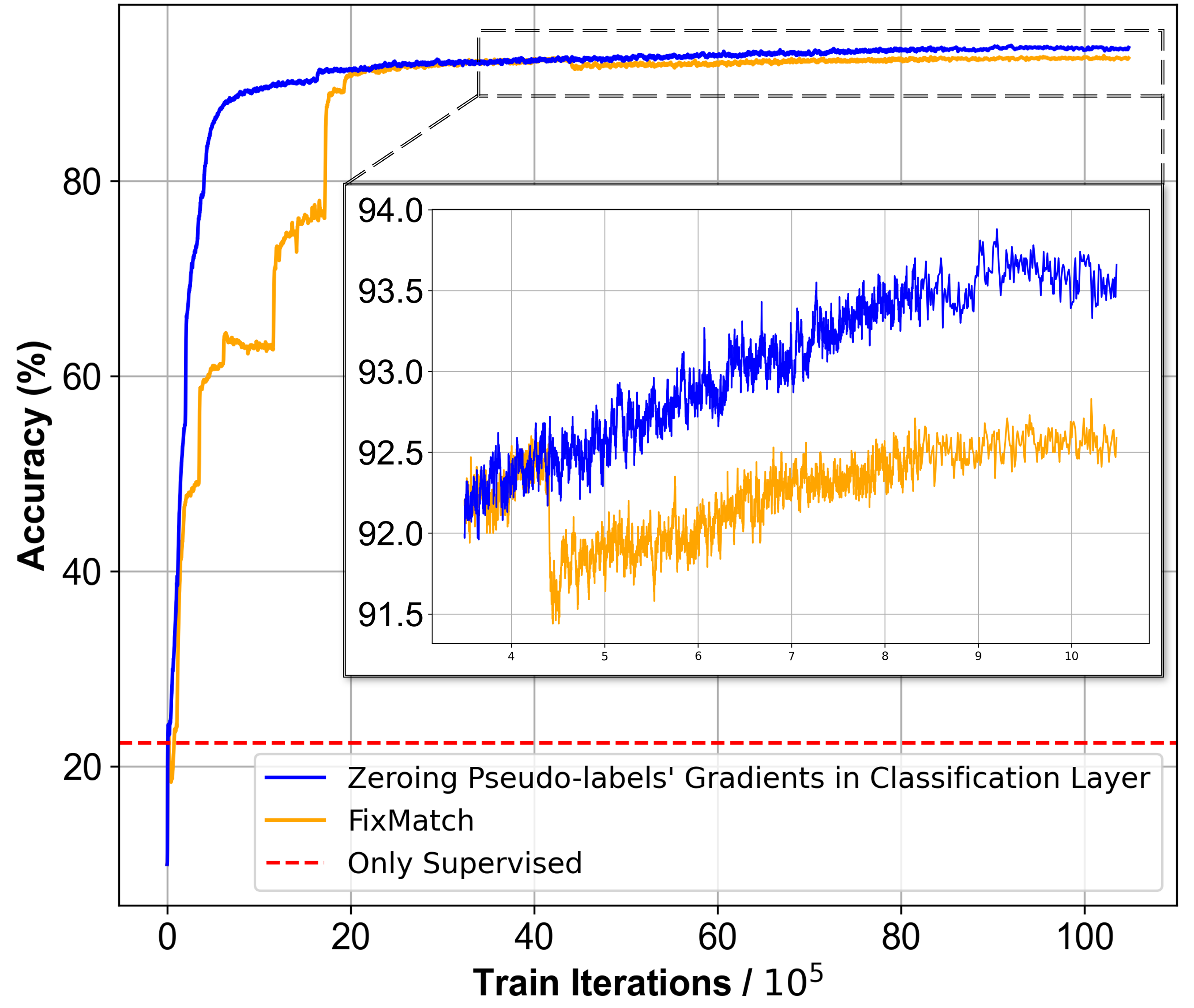}
\vspace{-6mm}
\caption{
The training curves on CIFAR-10 with a classic semi-supervised learning (SSL) setup. In this setup, CIFAR-10 dataset includes 50,000 training samples, with only 40 labeled and 49,960 unlabeled samples. Models are evaluated on a 10,000 samples test set.}
\label{fig1}
\vspace{-4mm}
\end{wrapfigure}
Despite the strengths of FixMatch, an anomalous behavior is observed in its training curve. As illustrated in Figure~\ref{fig1},
FixMatch does not gradually converge to the accuracy upper limit (yellow dashed line); instead, it exhibits unstable oscillations and sudden performance drops. This phenomenon suggests potential instability and a performance ceiling inherent in the model's structure.
Through meticulous research, we identified a pivotal, yet implicit, assumption underlying previous works \cite{sohn2020fixmatch,zhang2021flexmatch,wang2022freematch,chen2023softmatch} that all layers of a deep learning model would uniformly benefit from pseudo-labels. 
Our comprehensive investigations, including both formal methods and empirical experiments, challenge this assumption by revealing that \textit{NOT} all layers benefit from the pseudo-labels.
As shown in Figure \ref{fig1}, just zeroing pseudo-labels' gradients in 
classification layer markedly enhances model performance. 
This discovery on experiment opposes the prevailing notion and underscores the necessity of adopting layer-specific approaches when applying pseudo-labels within SSL frameworks.

This study leverages consistency regularization loss function to conduct a formal analysis, separating the linear classification layer from the feature extraction layer. We identified a significant discrepancy between these two layers in their pseudo-label learning behaviors.
Our findings delineate two key patterns: (1) The feature extraction layer enhances data clustering by optimizing the consistency regularization loss. This process causes well-optimized data to cluster tightly, forming high-density regions, while poorly optimized data remains scattered, creating low-density regions that contain many incorrect pseudo-labels. (2) Because the consistency regularization loss in low-density regions is difficult for the feature extraction layer to optimize, the linear classification layer must compensate by optimizing the loss for this data. So, the abundance of incorrect pseudo-labels in these low-density regions significantly impacts the linear classification layer, thereby undermining performance.

To accommodate the different learning patterns between layers, we propose LayerMatch. LayerMatch employs a layered pseudo-label strategy using Grad-ReLU and Avg-Clustering. Grad-ReLU zeros pseudo-label gradients in the linear classification layer while retaining them in the feature extraction layer. This reduces the impact of pseudo-label inaccuracies on the linear classification layer and maintains the learning integrity of the feature extraction layer. Avg-Clustering uses an exponential moving average strategy to stabilize the clustering centers of the feature extraction layer. This promotes high-density clustering, accelerates convergence, and reduces the impact of pseudo-label errors in low-density regions.

The contributions of this work are summarized as follows:
(1) We identify the discrepancy learning behaviors of pseudo-labels between the feature extraction layer and the linear classification layer in SSL. Pseudo-labels are beneficial for the feature extraction layer but detrimental to the linear classification layer.
(2) We propose LayerMatch, a novel strategy for layer-specific application of pseudo-labels that maintains the benefits of pseudo-labels for the feature extraction layer while mitigating their detrimental effects on the linear classification layer.
(3) We conduct extensive experiments and validate that LayerMatch achieves an average boost of $2.44\%$ against state-of-the-art methods on standard datasets and $10.38\%$ against FixMatch.
\section{Related Work}
\subsection{Semi-supervised Learning Methods} 
Semi-supervised learning is a widely researched field. Through numerous review papers \cite{van2020survey, zhu2005semi, ouali2020overview, yang2022survey, han2024deep, huang2024modelling}, it is evident that the research paradigms and methodological approaches are extremely diverse. For instance, ``transductive'' models \cite{gammerman2013learning,joachims2003transductive,joachims1999transductive, ge2023semi, li2023knowledge}, graph-based methods \cite{zhu2003semi,bengio2006label,liu2019deep, wan2021contrastive, song2024optimal}, generative modeling \cite{belkin2001laplacian,lasserre2006principled,hinton2007using,coates2011importance,goodfellow2012spike,kingma2014semi,pu2016variational,odena2016semi,salimans2016improved,li2021semantic,miao2021generative,wan2021contrastive,you2024diffusion} have been explored. This paper focuses on discussing the latest methods and the most widely used approaches. \textbf{Pseudo-Labeling} \cite{lee2013pseudo} generates artificial labels for unlabeled data and trains the model in a self-training manner. However, pseudo-labels often contain numerous errors, which can easily accumulate during the training process. A series of works have proposed filtering which pseudo-labels to use for self-training. FixMatch \cite{sohn2020fixmatch} adopts a fixed threshold to filter pseudo-labels. FlexMatch \cite{zhang2021flexmatch}, Freematch \cite{wang2022freematch}, and InstanT \cite{li2024instant} each employ a mechanism to set a dynamic threshold at different granular levels from dataset to example. Multi-Head Co-Training \cite{chen2022semi} proposes integrating multiple heads to generate higher-quality pseudo-labels. Some works \cite{ren2020not,chen2023softmatch} assign different weights to pseudo-labels based on confidence. \textbf{Consistency regularization} \cite{bachman2014learning, sajjadi2016regularization, laine2016temporal} is to make the model produce similar predictions for different perturbations of the same data. Many works \cite{sohn2020fixmatch,zhang2021flexmatch,wang2022freematch,chen2023softmatch,berthelot2019mixmatch,berthelot2019remixmatch} focus on consistency regularization for pseudo-labels. Subsequent improvements include adding adversarial learning to consistency regularization in Debiased \cite{chen2022debiased}, adding perturbations to the network for consistency regularization in FlatMatch \cite{huang2024flatmatch}, and incorporating posterior information into consistency regularization in InfoMatch \cite{han2024infomatch}. ReFixMatch \cite{nguyen2023boosting} utilizes low-confidence predictions for consistency regularization.
\subsection{Data Selection}
Not all data is beneficial to the model. 
Data selection is to evaluate and choose data that is beneficial to the model for training. 
In the domain of data selection, three primary methodologies stand out: \textit{retraining-based}, \textit{gradient-based}, and \textit{data-centric learning}. Retraining-based methods like the leave-one-out approach and Shapley values \cite{cook1982residuals, ghorbani2019data, jia2019towards, kwon2022beta} are impractical for deep learning due to their high computational demands. Gradient-based methods \cite{koh2017understanding, agarwal2017second, yeh2018representer, chen2021hydra, schioppa2022scaling, feldman2020neural, kwon2023datainf, pruthi2020estimating, charpiat2019input, kong2021resolving, grosse2023studying, schioppa2024theoretical} offer a more feasible alternative by approximating influence using derivatives, though they often assume model convexity. Data-centric learning involves strategies like datamodels, data efficiency, data pruning, model pruning, antidote data augmentation, feature selection, and active learning \cite{ilyas2022datamodels, jain2022efficient, paul2021deep, killamsetty2021retrieve, tan2024data, yang2022dataset, lyu2023deeper, chhabra2022fair, li2022learning, hall1999correlation, cai2018feature, cohn1996active, isal, nguyen2022measure, wei2015submodularity, solans2021poisoning, mehrabi2021exacerbating, chhabra2022robust, dai2023training}, which aim to optimize training data usage, enhance model robustness, or accelerate training processes. Anshuman et al. \cite{chhabra2023data} proposes data selection methods for Vision Transformers (ViT) \cite{dosovitskiy2020image} through influence functions.

\section{Preliminaries}

We formulate the framework of semi-supervised learning (SSL) for a $\mathcal{C}$-class classification problem. During the training stage, labeled and unlabeled data batches are randomly sampled from their respective datasets. The labeled data batch is denoted as $\mathcal{D}_{L} :=\left\{\mathbf{x}_{i}^l, \mathbf{y}_{i}^l \right\}_{i = 1}^{\mathcal{B}_L}$, and the unlabeled data batch is denoted as $\mathcal{D}_{U}:=\left\{\mathbf{x}_{i}^u \right\}_{i =1}^{\mathcal{B}_U}$. Meanwhile, $\mathcal{B}_L$ and $\mathcal{B}_U$ represent the batch sizes of the labeled and unlabeled data, 
respectively. To effectively train the model, SSL methods generally employ the supervised loss function for labeled data as follows:
\begin{equation}
\label{Ls}
    \mathcal{L}_s = \frac{1}{\mathcal{B}_L} \sum_{i=1}^{\mathcal{B}_L} \mathcal{H}\Bigg( \mathbf{y}_{i}^l, \mathcal{P}_{\boldsymbol{\mathrm{\Theta_t}},\boldsymbol{\beta_t}}\bigg( y \,\Big|\, \omega\big( \mathbf{x}_{i}^l \big) \bigg) \Bigg),
\end{equation}
where $\mathcal{H}(\cdot, \cdot)$ refers to cross-entropy loss,  $\omega(\cdot)$ donates the stochastic (weak) data augmentation function (i.e., random crop and flip), and $t$ means during $t$-th iteration. $\mathcal{P}_{\boldsymbol{\mathrm{\Theta_t}} ,\boldsymbol{\beta_t}}(\mathbf{y}|\mathbf{x}) \in \mathbb{R}^C$ donates the model's prediction. $\mathrm{\Theta_t}$ and $\boldsymbol{\beta_t}$ are the parameters of the feature extraction layer and the linear classification layer. 

For unlabeled data, a confidence threshold $\tau$ is chosen to select and combine unlabeled data into pseudo-labels, denoted as $\mathcal{D}_{\tau} := \{ (\mathbf{x}_{i}^u, \mathbf{y}_{i}^u) \, | \, \mathcal{P}_{\boldsymbol{\mathrm{\Theta_t}},\boldsymbol{\beta_t}}\big(\mathbf{y}_{i}^u = \mathbf{\hat{p}}_i \, | \, \omega(\mathbf{x}_{i}^u)) \geq \tau \}$, where $\mathbf{\hat{p}}_i$ is the one-hot label of $\operatorname{argmax}(\mathcal{P}_{\boldsymbol{\mathrm{\Theta_t}},\boldsymbol{\beta_t}}(\mathbf{y}|\omega(\mathbf{x}^u_i)))$. The unsupervised loss function for unlabeled data is calculated using cross-entropy with pseudo-labels:
\begin{equation}
\label{LU}
\mathcal{L}_u = \frac{1}{\left| \mathcal{D}_{\tau} \right|} \sum_{i=1}^{\left| \mathcal{D}_{\tau} \right|} \mathcal{H}\Bigg(\mathbf{y}_{i}^u, \mathcal{P}_{\boldsymbol{\mathrm{\Theta_t}},\boldsymbol{\beta_t}}\bigg( y \,\Big|\, \Omega\left( \mathbf{x}_{i}^u \right) \bigg) \Bigg),
\end{equation}
where $\Omega(\cdot)$ means the strong data augmentation function (i.e.,
RandAugment \cite{cubuk2020randaugment}) and $\left| \mathcal{D}_{\tau} \right|$ represents the number of pseudo-labels.

\section{Method}
\subsection{Motivation}
This subsection outlines the steps to derive Theorem \ref{minichange}. By analyzing the conclusions of Theorem \ref{minichange}, we gain insights into designing hierarchical pseudo-labeling strategies to improve SSL algorithms.


For clarity in discussion, this paper uses the One-Vs-Rest (OVR) strategy~\cite{biship2007pattern, murphy2012machine} to transform the $\mathcal{C}$-category classification problem into a binary classification problem. 
Without loss of generality, $\mathcal{C}$ is set to 2 for the following discussion in this subsection.

In the case of binary classification, the softmax classifier 
reduces to the sigmoid classifier. 
Instead of calculating probabilities for multiple classes, the model calculates the probability of a single class. $\mathcal{D} = \{\mathbf{x} \mid \mathbf{x} \in \mathbb{R}^n\}$ represents 
the bounded and compact input dataset, where each dimension of $\mathbf{x}$ is independent. $\mathcal{D}_{\boldsymbol{k}}$ represents the set of all $\mathbf{x}$ with label $\boldsymbol{k}$, where \(k \in \{0, 1\}\). 
Both \(\Omega(\mathbf{x})\) and \( \omega(\mathbf{x}) \) belong to \( \mathcal{D}_{\boldsymbol{k}} \), if and only if $\mathbf{x} \in \mathcal{D}_{\boldsymbol{k}}$. 
The expression for binary 
logistic regression on the output features of the feature extraction layer is given by:
\begin{align}
\label{p_Theta}
\mathcal{P}_{\boldsymbol{\mathrm{\Theta_t}}, \boldsymbol{\beta_t}}(\mathbf{x}) = \frac{1}{1 + e^{\boldsymbol{\beta_t}^T \mathcal{M}_{\mathrm{\Theta_t}}(\mathbf{x})}},
\end{align}
where $\boldsymbol{\beta_t} \in \mathbb{R}^n$ denotes the parameters of the linear classification layer, and the 
function $\mathcal{M}_{\boldsymbol{\mathrm{\Theta_t}}}(\cdot): \mathbb{R}^n \to \mathbb{R}^n$ denotes 
the feature extraction layer. 
The first-order partial derivatives of $\mathcal{M}_{\boldsymbol{\mathrm{\Theta_t}}}(\cdot)$ 
with respect to each component of the vector $\mathbf{x}$ exist.

To simplify the discussion, we need to rewrite the consistency regularization loss in Equation (\ref{LU}) into the following form in the continuous space:
\begin{align}
\label{LU_se}
\mathcal{L}_u = \frac{1}{\left| \mathcal{D}_{\tau} \right|}\sum_{i=1}^{|\mathcal{D}_{\tau}|} \left\| \mathcal{P}_{\boldsymbol{\mathrm{\Theta_t}}, \boldsymbol{\beta_t}}(\mathbf{x}_{i}^u + \boldsymbol{\Delta} \mathbf{x}) - \mathcal{P}_{\boldsymbol{\mathrm{\Theta_t}}, \boldsymbol{\beta_t}}(\mathbf{x}_{i}^u) \right\|_1,
\end{align}
where $\boldsymbol{\Delta} \mathbf{x} = \Omega(\mathbf{x}) - \omega(\mathbf{x})$. Without loss of generality, 
we assume that $\omega(\mathbf{x}) = \mathbf{x}$. 
$||\cdot||_1$ represents the $L_1$ norm, which is the sum of the absolute values of the components of a vector.

We assume that a substantial number of unlabeled data points $\{\mathbf{x}_{i}^u\}_{i=0}^{+\infty}$ and various data perturbations $\{\boldsymbol{\Delta} \mathbf{x}\}$ are sampled in the continuous space of input data. Thus, in the limit, the computed consistency regularization loss can be expressed using the integral form presented in Lemma \ref{lemma1}. A detailed proof is available in the Appendix \ref{Proof_Lemma}.

\setcounter{lemma}{0}
\renewcommand{\thelemma}{4.1}

\begin{lemma}
\label{lemma1}
Equation 
(\ref{LU_se}) leads to a simplified integral expression for consistency regularization loss function:
\begin{align}
\label{LU3}
\mathcal{L}_u = \iint\limits_{\mathcal{D}} \|\nabla_\mathbf{x} \mathcal{P}_{\boldsymbol{\mathrm{\Theta_t}}, \boldsymbol{\beta_t}}\|_1 \, dV,
\end{align}
where $\mathcal{D}$ represents the continuous input data space spanned by all unlabeled data under infinite data augmentation, and $\nabla_\mathbf{x}$ represents the gradient operator with respect to the input $\mathbf{x}$.
\end{lemma}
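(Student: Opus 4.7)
The plan is to derive the integral form by combining a first-order Taylor expansion with two limiting procedures: sending the number of unlabeled samples $\mathbf{x}_i^u$ to infinity, and exhausting the set of augmentation offsets $\boldsymbol{\Delta}\mathbf{x}$. First I would write
\begin{equation*}
\mathcal{P}_{\boldsymbol{\mathrm{\Theta_t}},\boldsymbol{\beta_t}}(\mathbf{x}_i^u + \boldsymbol{\Delta}\mathbf{x}) - \mathcal{P}_{\boldsymbol{\mathrm{\Theta_t}},\boldsymbol{\beta_t}}(\mathbf{x}_i^u) = \nabla_\mathbf{x}\mathcal{P}_{\boldsymbol{\mathrm{\Theta_t}},\boldsymbol{\beta_t}}(\mathbf{x}_i^u)\cdot \boldsymbol{\Delta}\mathbf{x} + O(\|\boldsymbol{\Delta}\mathbf{x}\|^2),
\end{equation*}
which is licensed by the stated existence of the first-order partials of $\mathcal{M}_{\boldsymbol{\mathrm{\Theta_t}}}$ (so $\mathcal{P}$ inherits coordinate-wise differentiability through the sigmoid in (\ref{p_Theta})). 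Under the One-Vs-Rest binary reduction, $\mathcal{P}$ is scalar-valued, so the $L_1$ norm in (\ref{LU_se}) collapses to an absolute value and each summand becomes $|\nabla_\mathbf{x}\mathcal{P}\cdot\boldsymbol{\Delta}\mathbf{x}|$ up to quadratic error.

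Next I would carry out the two limits. Sending $|\mathcal{D}_\tau|\to\infty$ turns the empirical average $\frac{1}{|\mathcal{D}_\tau|}\sum_i$ into a Riemann/Monte-Carlo integral over the compact input support $\mathcal{D}$, with any sampling density absorbed into the volume element $dV$. Enriching the augmentation set so that $\boldsymbol{\Delta}\mathbf{x}$ probes every coordinate direction produces an inner expectation $\mathbb{E}_{\boldsymbol{\Delta}\mathbf{x}}\bigl[|\nabla_\mathbf{x}\mathcal{P}\cdot\boldsymbol{\Delta}\mathbf{x}|\bigr]$ at each $\mathbf{x}$. At this point I would invoke the explicit hypothesis that the coordinates of $\mathbf{x}$ are independent, so $\boldsymbol{\Delta}\mathbf{x}$ may be taken as a symmetric product distribution on $\mathbb{R}^n$. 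Sign-symmetry of the components $\Delta x_j$ forces the cross terms $\partial_j\mathcal{P}\,\partial_k\mathcal{P}\,\mathbb{E}[\Delta x_j\Delta x_k]$ to vanish, and the expectation reduces (up to a constant that can be absorbed into $dV$) to $\sum_j |\partial_j\mathcal{P}| = \|\nabla_\mathbf{x}\mathcal{P}\|_1$.

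The main obstacle I expect is rigorously justifying this last collapse, because (\ref{LU3}) retains no explicit $\boldsymbol{\Delta}\mathbf{x}$ dependence while (\ref{LU_se}) is phrased as a sum over augmentations. The cleanest resolution is to be explicit that, under coordinate-independence, the inner integral over a symmetric product measure factorizes per coordinate and therefore yields exactly the coordinate-wise sum of $|\partial_j\mathcal{P}|$; anisotropic or correlated augmentations would give a different weighted seminorm, so the independence hypothesis is actually doing real work and should be flagged. The $O(\|\boldsymbol{\Delta}\mathbf{x}\|^2)$ Taylor remainder drops out in the same continuum limit since augmentations are small perturbations on a bounded $\mathcal{D}$, and the dominated-convergence bookkeeping needed here is immediate from compactness. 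Stitching the two limits together transforms (\ref{LU_se}) into the claimed integral (\ref{LU3}).
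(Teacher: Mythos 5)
Your overall route mirrors the paper's: a first-order expansion (you use a Taylor estimate with quadratic remainder, the paper invokes the Mean Value Theorem and then a small-$\boldsymbol{\Delta}\mathbf{x}$ approximation), followed by a continuum limit that converts the empirical average over $\mathcal{D}_\tau$ into an integral over $\mathcal{D}$; the paper realizes that limit by partitioning $\mathcal{D}$ into hypercubes of side $h$ whose vertices serve as $\mathcal{D}_\tau$, using the single equal-coordinate displacement $\boldsymbol{\Delta}\mathbf{x}=(h,\dots,h)$, and sending $h\to 0$.

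The gap is in the mechanism you propose for the decisive normalization step. You want to collapse the inner augmentation expectation $\mathbb{E}_{\boldsymbol{\Delta}\mathbf{x}}\bigl[|\nabla_\mathbf{x}\mathcal{P}\cdot\boldsymbol{\Delta}\mathbf{x}|\bigr]$ to $\sum_j|\partial_j\mathcal{P}|$ by appealing to vanishing cross terms $\partial_j\mathcal{P}\,\partial_k\mathcal{P}\,\mathbb{E}[\Delta x_j\Delta x_k]$ under a sign-symmetric product measure. But cross-term cancellation is a second-moment fact: it shows $\mathbb{E}\bigl[(\nabla_\mathbf{x}\mathcal{P}\cdot\boldsymbol{\Delta}\mathbf{x})^2\bigr]=\sum_j(\partial_j\mathcal{P})^2\,\mathbb{E}[\Delta x_j^2]$, a weighted squared $\ell_2$ seminorm, and the absolute value in a first moment does not distribute across the coordinate sum. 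For symmetric i.i.d.\ components, a Khintchine-type comparison gives $\mathbb{E}\bigl[|\nabla_\mathbf{x}\mathcal{P}\cdot\boldsymbol{\Delta}\mathbf{x}|\bigr]\asymp\|\nabla_\mathbf{x}\mathcal{P}\|_2$, not the $\|\nabla_\mathbf{x}\mathcal{P}\|_1$ required by Equation~\eqref{LU3}. To obtain the $\ell_1$ norm directly you need a structurally different augmentation ensemble — for instance axis-aligned perturbations $\boldsymbol{\Delta}\mathbf{x}=\pm h\,\mathbf{e}_j$ probing one coordinate at a time, so that each term contributes $h|\partial_j\mathcal{P}|$ and summation over $j$ yields $h\|\nabla_\mathbf{x}\mathcal{P}\|_1$ — rather than the diffuse symmetric product measure you posit. (In fairness, the paper's own proof sidesteps rather than resolves this very point: since $\mathcal{P}$ is scalar-valued, $\|\nabla\mathcal{P}(\mathbf{x}_i^u)\cdot\boldsymbol{\Delta}\mathbf{x}\|_1$ with $\boldsymbol{\Delta}\mathbf{x}=(h,\dots,h)$ equals $h\,|\sum_j\partial_j\mathcal{P}|$, which is then silently replaced by $h\sum_j|\partial_j\mathcal{P}|=h\|\nabla\mathcal{P}\|_1$.)
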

By leveraging concepts from the PAC (Probably Approximately Correct) learning theoretical framework \cite{chapelle2009semi}, we use $\mathcal{HS}$ to represent the hypothesis space. $\mathcal{HS}$ is the set of all possible parameter values $\boldsymbol{\mathrm{\Theta_t}}$ of the feature extraction layer. Minimizing consistency regularization in Equation (\ref{LU3}) imposes the following constraints on the feature extraction layer's parameter $\boldsymbol{\mathrm{\Theta_t}}$ and the linear classification layer's parameters $\boldsymbol{\beta_t}$:
\begin{align}
\label{Consistency_argmin}
\underset{\boldsymbol{\mathrm{\Theta_t}} \in \mathcal{HS}, \, \boldsymbol{\beta_t} \in \mathbb{R}^n}{\arg\min}\mathcal{L}_u. 
\end{align}
Minimizing object in Equation (\ref{Consistency_argmin}) ensures that the model's predictions remain consistent under perturbations of the input data, thereby improving the robustness of the learned features. In practice, $\mathcal{L}_u$ is optimized iteratively to a local minimum. In the context of deep learning generalization theory, this local minimum approximates the global optimal minimum \cite{xie2020diffusion, xie2022adaptive}. Without loss of generality, 
we may assume that the global optimal minimum is 0. 
An iterative approach to minimizing consistency regularization is expressed in Equation \eqref{Consistency_argmin}, formalized as  "$\bm{\lim_{t \to +\infty} \mathcal{L}_u = 0}$" where $t$ represents the $t$-th iteration. We utilize the formal language of the PAC learning framework to describe this optimization process. This approach allows the expression to formally separate the feature extraction layer from the linear classification layer, leading to the following Theorem \ref{minichange}. Detailed proof can be found in the Appendix \ref{Proof_Theorem}.
\begin{figure*}[t] 
  \centering
  \includegraphics[width=\textwidth]{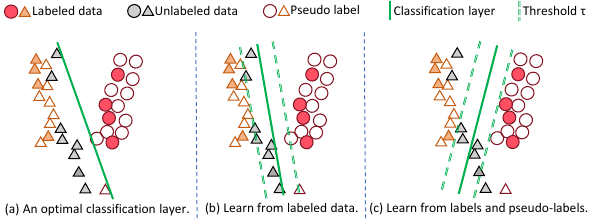}
  \caption{The example figure illustrates the data features $\mathcal{M}$ generated by the feature extraction layer $\Theta$, and explains how noisy pseudo-labels can be harmful for the linear classification layer. The "triangle $\triangle$" and "circle $\bigcirc$" represent different classes. (a) The optimal linear classification layer, which is the ideal linear classification layer constructed based on labels of all data; (b) Learning from labeled data only, where the model primarily relies on labeled data to construct the linear classification layer. Selects pseudo-labels based on the threshold $\tau$; (c) Learning from both labels and pseudo-labels, where the model uses pseudo-labels generated from unlabeled data for training. However, due to the potential noise in pseudo-labels, the linear classification layer shifts and fails to accurately distinguish between different classes.}
  \label{fig2}
  \vspace{-5mm}
\end{figure*}
\setcounter{theorem}{0}
\renewcommand{\thetheorem}{4.2}
\begin{theorem}
\label{minichange}
\(\lim_{t \to +\infty} \mathcal{L}_u = 0\)
implies finding a sequence of points \(\{\Theta_t, \boldsymbol{\beta}_t\}_{t=0}^{+\infty}\). For any $\epsilon > 0$, there exists a sequence $\{\delta_t\}_{t=0}^{+\infty}$, where $\delta_t \in [0,1]$ and $\lim_{t \to +\infty}\delta_t = 0$. There exists a $T > 0$ such that if $t > T$, the following holds:
\begin{align}
\label{eq_minichange}
P_\mathbf{x}\Big[\mathcal{P}_{\boldsymbol{\mathrm{\Theta_t}}, \boldsymbol{\beta_t}}\big(1 - \mathcal{P}_{\boldsymbol{\mathrm{\Theta_t}}, \boldsymbol{\beta_t}}\big)\big|\big|\boldsymbol{\beta}_t  \nabla_\mathbf{x} \mathcal{M}_{\Theta_t}\big|\big|_1 < \epsilon \Big] > 1 - \delta_t,
\end{align}

where $P_\mathbf{x}[\cdot]$ is the probability measure over the input data space $\mathcal{D}$. 
\end{theorem}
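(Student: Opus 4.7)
The plan is to reduce the integrand in Lemma~\ref{lemma1} to exactly the quantity that appears inside the probability in Equation~(\ref{eq_minichange}), and then to convert the $L_1$-type decay $\mathcal{L}_u \to 0$ into convergence in probability via Markov's inequality on the bounded compact domain $\mathcal{D}$.

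First, I would differentiate $\mathcal{P}_{\Theta_t,\boldsymbol{\beta}_t}(\mathbf{x}) = 1/(1+e^{\boldsymbol{\beta}_t^T \mathcal{M}_{\Theta_t}(\mathbf{x})})$ via the chain rule. Writing $u(\mathbf{x}) = \boldsymbol{\beta}_t^T \mathcal{M}_{\Theta_t}(\mathbf{x})$, the sigmoid derivative gives $\nabla_\mathbf{x}\mathcal{P}_{\Theta_t,\boldsymbol{\beta}_t} = -\mathcal{P}_{\Theta_t,\boldsymbol{\beta}_t}(1-\mathcal{P}_{\Theta_t,\boldsymbol{\beta}_t})\,\nabla_\mathbf{x} u$, while a second chain rule expresses $\nabla_\mathbf{x} u$ as the Jacobian of $\mathcal{M}_{\Theta_t}$ contracted against $\boldsymbol{\beta}_t$. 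Taking the $L_1$ norm and substituting into Equation~(\ref{LU3}) produces
\begin{equation*}
\mathcal{L}_u = \iint_{\mathcal{D}} \mathcal{P}_{\Theta_t,\boldsymbol{\beta}_t}\bigl(1-\mathcal{P}_{\Theta_t,\boldsymbol{\beta}_t}\bigr)\,\bigl\|\boldsymbol{\beta}_t\nabla_\mathbf{x}\mathcal{M}_{\Theta_t}\bigr\|_1 \, dV.
\end{equation*}
Denote this non-negative integrand by $f_t(\mathbf{x})$; it is precisely the object inside the probability in the theorem.

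Next, I would model $P_\mathbf{x}$ as the normalized Lebesgue measure on the bounded compact set $\mathcal{D}$, so that $\mathbb{E}_\mathbf{x}[f_t] = \mathcal{L}_u / \mathrm{Vol}(\mathcal{D})$. Markov's inequality then yields, for every $\epsilon > 0$,
\begin{equation*}
P_\mathbf{x}\bigl[f_t(\mathbf{x}) \geq \epsilon\bigr] \;\leq\; \frac{\mathcal{L}_u}{\epsilon \, \mathrm{Vol}(\mathcal{D})} \;=:\; \delta_t.
\end{equation*}
The sequence $\{\Theta_t, \boldsymbol{\beta}_t\}_{t \geq 0}$ is furnished by the iterates that minimize Equation~(\ref{Consistency_argmin}); the hypothesis $\lim_{t\to+\infty}\mathcal{L}_u = 0$ forces $\delta_t \to 0$, and choosing $T$ so that $\delta_t \in [0,1]$ for all $t > T$, the complementary event gives $P_\mathbf{x}[f_t < \epsilon] > 1 - \delta_t$, which is exactly Equation~(\ref{eq_minichange}).

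The main obstacle is not analytic but notational: one must fix unambiguously what $\boldsymbol{\beta}_t\nabla_\mathbf{x}\mathcal{M}_{\Theta_t}$ denotes (I would interpret it as the row vector $\boldsymbol{\beta}_t^T J_{\mathcal{M}_{\Theta_t}}$, whose $L_1$ norm coincides with $\|J_{\mathcal{M}_{\Theta_t}}^T \boldsymbol{\beta}_t\|_1$), and one must commit to a base measure underlying $P_\mathbf{x}[\cdot]$. A secondary subtlety is the assumption that the global infimum of $\mathcal{L}_u$ equals zero; the paper inherits this via deep-learning generalization theory, and without it Markov's bound would only give convergence in probability toward a neighborhood of the infimum rather than toward zero.
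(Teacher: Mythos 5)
Your proposal reaches the same conclusion and shares the same high-level skeleton as the paper's proof — expand $\nabla_\mathbf{x}\mathcal{P}_{\Theta_t,\boldsymbol{\beta}_t}$ by the sigmoid chain rule to identify the integrand of Lemma~\ref{lemma1} with the quantity inside the probability, then convert $\mathcal{L}_u\to 0$ into an $\epsilon$-$\delta$ statement over the measure $P_\mathbf{x}$. The difference lies in how that conversion step is justified. The paper asserts ``by the fundamental theorem of calculus,'' that $\lim_{t\to+\infty} P_\mathbf{x}[\|\nabla_\mathbf{x}\mathcal{P}_{\Theta_t,\boldsymbol{\beta}_t}\|_1=0]=1$; this intermediate claim is not actually implied by $\int_{\mathcal{D}}\|\nabla_\mathbf{x}\mathcal{P}_{\Theta_t,\boldsymbol{\beta}_t}\|_1\,dV\to 0$ (take the counterexample $f_t\equiv 1/t$, whose integral vanishes yet whose zero set is empty), although the final $\epsilon$-$\delta$ statement the paper writes down afterwards is the correct convergence-in-measure formulation. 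Your route via Markov's inequality on the normalized Lebesgue measure is the standard, rigorous way to pass from $L^1$ convergence to convergence in probability, and it has the added virtue of producing the sequence $\delta_t = \mathcal{L}_u/(\epsilon\,\mathrm{Vol}(\mathcal{D}))$ explicitly rather than invoking limit definitions abstractly. In short: same decomposition and endpoint, but your middle step is more careful and actually repairs an imprecision in the published argument. The two caveats you flag — committing to a base measure for $P_\mathbf{x}$, and the standing assumption that the infimum of $\mathcal{L}_u$ is genuinely $0$ — are real and also implicitly assumed by the paper.
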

In the following, we analyze \textit{the pseudo-labels' impact on both the feature extraction layer and the linear classification layer}, as demonstrated by Theorem \ref{minichange} and illustrated in Figure \ref{fig2}.

First, let \(\delta\) be sufficiently small so that \(\mathcal{P}_{\boldsymbol{\mathrm{\Theta_t}}, \boldsymbol{\beta_t}}(1 - \mathcal{P}_{\boldsymbol{\mathrm{\Theta_t}}, \boldsymbol{\beta_t}})||\boldsymbol{\beta}_t \nabla_\mathbf{x}  \mathcal{M}_{\Theta_t}||_1 < \epsilon\) approximately holds over the entire \(\mathcal{D}\). Therefore, all data points satisfy \(\mathcal{P}_{\boldsymbol{\mathrm{\Theta_t}}, \boldsymbol{\beta_t}}(1 - \mathcal{P}_{\boldsymbol{\mathrm{\Theta_t}}, \boldsymbol{\beta_t}})||\boldsymbol{\beta}_t \nabla_\mathbf{x}  \mathcal{M}_{\Theta_t}||_1 < \epsilon\). The value of \(\mathcal{P}_{\boldsymbol{\mathrm{\Theta_t}}, \boldsymbol{\beta_t}}(1 - \mathcal{P}_{\boldsymbol{\mathrm{\Theta_t}}, \boldsymbol{\beta_t}})\) belongs to $[0, 0.25]$. Data points far away from the linear classification layer can hardly affect the model's optimization, because \(\mathcal{P}_{\boldsymbol{\mathrm{\Theta_t}}, \boldsymbol{\beta_t}}(1 - \mathcal{P}_{\boldsymbol{\mathrm{\Theta_t}}, \boldsymbol{\beta_t}})\) is minute. Only data points near the linear classification layer are Considered. For an easy discussion, assume \(\mathcal{P}_{\boldsymbol{\mathrm{\Theta_t}}, \boldsymbol{\beta_t}}(1 - \mathcal{P}_{\boldsymbol{\mathrm{\Theta_t}}, \boldsymbol{\beta_t}})\) is greater than a certain constant, i.e. $\tau\small(1-\tau\small)$. Thus, the topic simplifies to discussing a clear and meaningful item \(||\boldsymbol{\beta}_t \nabla_\mathbf{x}  \mathcal{M}_{\Theta_t}||_1 < \epsilon\). 

Next, we discuss the case where data points satisfy \(||\boldsymbol{\beta}_t \nabla_\mathbf{x}  \mathcal{M}_{\Theta_t}||_1 < \epsilon\). $\nabla_\mathbf{x} \mathcal{M}_{\Theta_t}$ represents the impact of minimizing pseudo-label's consistency regularization loss in Equation \ref{LU3} on the feature extraction layer. The term $\|\nabla_\mathbf{x} \mathcal{M}_{\Theta_t}\|_1$ reflects the tightness of data clusters. The larger value of $\|\nabla_\mathbf{x} \mathcal{M}_{\Theta_t}\|_1$ in a region, the sparser data cluster in that region. The sparser data cluster means poor data clustering 
ability of the feature extraction layer. Under the constraint $\|\boldsymbol{\beta}_t  \nabla_\mathbf{x} \mathcal{M}_{\Theta_t}\|_1 < \epsilon$, 
 $\nabla_\mathbf{x} \mathcal{M}_{\Theta_t}$ corresponds to the weight coefficients when $\boldsymbol{\beta}_t$ is being optimized. Hence, regions with poor clustering (large $\|\nabla_\mathbf{x} \mathcal{M}_{\Theta_t}\|_1$) have a significant impact on the linear classification layer \(\boldsymbol{\beta}_t\). These poorly clustered categories and regions are more likely to generate incorrect pseudo-labels. Therefore, pseudo-labels overall exhibit a detrimental effect on the linear classification layer.

As shown in Figure \ref{fig2}, the data clustering of "circle \(\bigcirc\)" is poor. There is an incorrect pseudo-labels in the regions where "circle \(\bigcirc\)" category is most poorly clustered. Comparing Figure \ref{fig2}(b) and Figure \ref{fig2}(c), it can be seen that pseudo-labels cause the linear classification layer \(\boldsymbol{\beta}_t\) to shift and a decrease on classification performance.

To improve the model's performance, it is essential to propose a method to eliminate the detrimental effects of pseudo-labels on the linear classification layer. Meanwhile, the method can still leverage pseudo-labels to promote clustering in the feature extraction layer to obtain tight clusters.

\subsection{LayerMatch}
To adapt to the specific characteristics of different layers, we propose the LayerMatch method. 
It includes two layer-wise pseudo-label strategies: "Grad-ReLU" and "Avg-Clustering."

\subsubsection{Grad-ReLU}

The \(\text{Grad-ReLU}\) addresses the negative impact of pseudo-labels on the model's classification layer in semi-supervised learning. 
Its core idea is to nullify the gradient influence of the unsupervised loss on the classification layer. By updating the classification layer parameters solely with the supervised loss, \(\text{Grad-ReLU}\) avoids the misleading effects of pseudo-labels.
Meanwhile, by retaining the influence of the unsupervised loss gradients in the feature extraction layer, \(\text{Grad-ReLU}\) allows the feature extraction layer to learn the clustering features of the data from pseudo-labels, effectively capturing the data distribution.

At $t$-th iteration,\(\text{Grad-ReLU}\) updates the feature extraction layer 
parameters \(\Theta_t\) using gradients from both the supervised loss the supervised loss \(\mathcal{L}_s\) and the unsupervised loss \(\mathcal{L}_u\).
For the classification layer parameters \(\beta_t\), it updates them using only the supervised loss \(\mathcal{L}_s\)
, setting the gradients from \(\mathcal{L}_u\)  to zero. This approach ensures reliable updates for the classification layer while allowing the feature extraction layer to learn from the pseudo-labels.
Under the \(\text{Grad-ReLU}\) strategy, the gradients calculation rule is given by:
\[
 \nabla_{\boldsymbol{W}}\text{Grad-ReLU}(\mathcal{L})=
\begin{cases}
    \boldsymbol{0}, & \text{if \(W\) is param. of classification layer AND \(\mathcal{L}\) $\neq$ \(\mathcal{L}_s\)},\\
    \frac{\partial \mathcal{L}}{\partial \boldsymbol{W}}, & \text{otherwise.} \\
\end{cases}
\]
Grad-ReLU effectively separates the learning tasks of different layers, allowing the model to benefit more stably from unsupervised data while avoiding the adverse effects of pseudo-labels on the classification layer.

\subsubsection{Avg-Clustering}

The Avg-Clustering strategy aims to integrate stable clustering centers to promote the formation of high-density clusters, accelerate convergence, and reduce the impact of pseudo-label errors in low-density regions. By using an exponential moving average (EMA) strategy, we integrate a stable output transformation to stabilize the clustering centers, accelerating the convergence of features in the feature extraction layer 
towards stable clustering centers. At the \(t\)-th iteration, we define the following Avg-Clustering loss function:
\begin{equation}
\label{avgclustering}
\mathcal{L}_\text{Avg-Clustering} = \frac{1}{\left| \mathcal{D}_{\tau} \right|} \sum_{i=1}^{\left| \mathcal{D}_{\tau} \right|} \mathcal{H}\Bigg(y_{\text{ul}}^{(i)}, \mathcal{P}_{\mathrm{\Theta_t},\boldsymbol{\overline{\beta}}_t}\bigg( y \,\Big|\, \Omega\left( x_{\text{ul}}^{(i)} \right) \bigg) \Bigg).
\end{equation}
The acquisition of \(\boldsymbol{\overline{\beta}}_t\) is achieved through the following formula:
\[
\begin{aligned}
    &\boldsymbol{\overline{\beta}}_t = 
    \begin{cases}
        \boldsymbol{\beta}_t,  &\text{if \(t\) mod \(N\) = 0}, \\
        m \cdot \boldsymbol{\overline{\beta}}_{t-1} + (1-m)\cdot\boldsymbol{\tilde{\beta}},  &\text{otherwise},
    \end{cases} \\
    &\boldsymbol{\tilde{\beta}} = \boldsymbol{\overline{\beta}}_{t-1} - \alpha \cdot \nabla_{\boldsymbol{\overline{\beta}}_{t-1}}\mathcal{L}_\text{Avg-Clustering}.
\end{aligned}
\]
This strategy smooths the parameter update process, ensuring that the clustering centers of the feature extraction layer remain more stable. Consequently, it enhances the model's generalization ability and accelerates its convergence speed.

The overall objective of LayerMatch is to optimize the model by integrating the supervised loss, unsupervised loss, and Avg-Clustering loss. Specifically, LayerMatch employs the Grad-ReLU strategy to separate the gradient updates for different layers and uses the Avg-Clustering strategy to stabilize the clustering centers of the feature extraction layer to achieve optimal learning outcomes.

The overall objective function of LayerMatch is given by:
\begin{equation}
\label{overall}
\mathcal{L} = \text{Grad-ReLU}(\mathcal{L}{s} + w_u \mathcal{L}{u} + w_{ac} \mathcal{L}_\text{Avg-Clustering}).
\end{equation}

\section{Experiments}
\subsection{Implementation Details}

We evaluate the efficacy of LayerMatch on several SSL image classification common benchmarks, including CIFAR-10/100 \cite{krizhevsky2009learning},  STL-10 \cite{coates2011analysis}, and ImageNet\cite{deng2009imagenet}.
Specially, CIFAR-10 contains 10 classes with 6,000 images per class while CIFAR-100 features 100 classes with 600 images per class. 
STL-10 \cite{coates2011analysis} presents unique challenges with 500 labeled samples per class and 100,000 unlabeled samples, which is regarded as a more challenging dataset because some of them are out-of-distribution.
Moreover, ImageNet-100 is a subset of ImageNet-1k Dataset, containing 100 classes and about 1350 images per class.

Following previous works~\cite{sohn2020fixmatch,xu2021dash,berthelot2021adamatch,li2024instant, wang2022usb}, we conduct experiments with two experimental frameworks: fine-tuning pre-trained Vision Transformers (ViT) and training from scratch. 
For fine-tuning pre-trained ViTs experimental framework, we conduct experiments with varying amounts of labeled data.
In detail, labeled samples per class on CIFAR-10, CIFAR-100 and STL-10 are \{1, 4, 25\}, \{2, 4, 25\} and  \{1, 4, 10\}, respectively. 
For training from scratch experimental framework, labeled samples per class on CIFAR-10, CIFAR-100, STL-10 and ImageNet-100 are \{4, 25, 400\}, \{4, 25, 100\},  \{4, 25, 100\} and \{100\}.

For fair comparison, 
we train and evaluate all methods using the unified codebase USB \cite{wang2022usb} and use the same backbones and hyperparameters as InstanT \cite{li2024instant} and FreeMatch \cite{wang2022freematch}. For fine-tuning pre-trained Vision Transformers (ViT),  
AdamW \cite{loshchilov2017decoupled} opeimizer is used. The learning rates are specifically tailored for each dataset: $5e-4$ for CIFAR-10 and CIFAR-100, and $1e-4$ for STL-10. 
We employ a cosine learning rate annealing scheme throughout the training process, maintaining a total training step count of 204,800 iterations. Both labeled and unlabeled batches are 8. For training models from scratch, the optimizer is SGD, supplemented with a momentum of 0.9. The initial learning rate is set at $0.03$. The learning rate scheme uses the cosine learning rate annealing strategy to modulate the learning rate over a total of $2^{20}$ training iterations. The experiments for CIFAR-10/100 and STL-10 are conducted on a single A100-40G GPU. The experiments for ImageNet-100 are conducted on four A100-40G GPUs. More details on the hyper-parameters are shown in Appendix \ref{Resources} and \ref{Hyperparameter}. 

In the implementation of LayerMatch, five specific parameters need to be set. 
The parameters in Avg-Clustering \ref{avgclustering}, \(n\), \(m\) and \(\alpha\), are set \(m\) to 2048, 0.999 and 5e-4, respectively. The loss weights, \(w_u\) and \(w_{ac}\) are set to $1$ in the overall objective function \ref{overall}.

\subsection{Results}

\textbf{Main Results:} 
According to the results in the Table \ref{table:main}, LayerMatch demonstrates significant performance improvements across various settings compared to SOTA methods.  Overall, LayerMatch achieves an average accuracy improvement of $2.44\%$ across different datasets. 
Specifically, on CIFAR-10 with 10 labels, LayerMatch improves accuracy by $6.62\%$. On CIFAR-100 with 400 labels, it improves accuracy by $3.69\%$. On STL-10 with 40 labels, it improves accuracy by $3.74\%$. These results indicate that LayerMatch has a significant advantage in handling various semi-supervised learning tasks, especially when labeled data is limited. Our method, LayerMatch, stands out due to its innovative focus on the distinct behavior patterns of the backbone feature network and the linear classification layer in response to pseudo-labels. Unlike previous approaches that apply a uniform pseudo-label learning strategy across all layers, LayerMatch adopts tailored pseudo-label strategies for different layers.

It is noteworthy that comparing Pseudo-Label with Only Supervised reveals that simply applying the Pseudo-Label method can harms 
the performance of ViT. This indicates that pseudo-labels are not always beneficial to the model 
due to the noise in pseudo-labels. FixMatch, which uses a pseudo-labels filter, achieves an average performance improvement of $1.3\%$ over Only Supervised. LayerMatch is a more refined pseudo-label filtering strategy that filters pseudo-labels based on the sensitivity of different layers to pseudo-label noise.

\begin{table}[t!]
\centering
\caption{Top-1 accuracy (\%) with pre-trained ViT. "Only Supervised" refers to fine-tuning the pre-trained ViT model using only labeled data. The best performance is bold and the second best performance is underlined. All results are averaged with three random seeds \{0,1,2\} and reported with 2-sigma error bar.}
\label{table:main}

\begin{adjustbox}{width=\columnwidth,center}
\begin{tabular}{@{}c|ccc|ccc|ccc|c}
\toprule Dataset & \multicolumn{3}{c|}{CIFAR-10}& \multicolumn{3}{c|}{CIFAR-100}& \multicolumn{3}{c|}{STL-10} & Average \\ 
\cmidrule(r){1-1}\cmidrule(lr){2-4}\cmidrule(lr){5-7}\cmidrule(lr){8-10}\cmidrule(lr){11-11}
 \,\# Label & \multicolumn{1}{c}{10} & \multicolumn{1}{c}{40} & \multicolumn{1}{c|}{250} & \multicolumn{1}{c}{200} & \multicolumn{1}{c}{400}  & \multicolumn{1}{c|}{2500}  & \multicolumn{1}{c}{10} & \multicolumn{1}{c}{40}  & \multicolumn{1}{c|}{100} & \multicolumn{1}{c}{394.44}  \\ 
\cmidrule(r){1-1}\cmidrule(lr){2-4}\cmidrule(lr){5-7}\cmidrule(lr){8-10}\cmidrule(lr){11-11}
\, Only Supervised & 60.52{\scriptsize $\pm$5.9}&83.83{\scriptsize $\pm$3.2}&94.55{\scriptsize $\pm$0.4}&63.82{\scriptsize $\pm$0.3}&73.60{\scriptsize $\pm$0.7}&83.11{\scriptsize $\pm$0.4}&51.61{\scriptsize $\pm$10.7}&81.02{\scriptsize $\pm$3.3}&89.15{\scriptsize $\pm$0.9}&75.69\\
 \,Pseudo-Label \cite{lee2013pseudo} & 37.65{\scriptsize $\pm$3.1} & 88.21{\scriptsize $\pm$5.3} & 95.42{\scriptsize $\pm$0.4} & 63.34{\scriptsize $\pm$2.0} & 73.13{\scriptsize $\pm$0.9} & 84.28{\scriptsize $\pm$0.1} & 30.74{\scriptsize $\pm$6.7} & 57.16{\scriptsize $\pm$4.2} & 73.44{\scriptsize $\pm$1.5} & 67.04\\
 
 \,MeanTeacher \cite{tarvainen2017mean} & 64.57{\scriptsize $\pm$4.9} & 87.15{\scriptsize $\pm$2.5} & 95.25{\scriptsize $\pm$0.5} & 59.50{\scriptsize $\pm$0.8} & 69.42{\scriptsize $\pm$0.9} & 82.91{\scriptsize $\pm$0.4}  & 42.72{\scriptsize $\pm$7.8} & 66.80{\scriptsize $\pm$3.4} & 77.71{\scriptsize $\pm$1.8} & 71.78\\ 
 \, MixMatch \cite{berthelot2019mixmatch} & 65.04{\scriptsize $\pm$2.6} & 97.16{\scriptsize $\pm$0.9} & 97.95{\scriptsize $\pm$0.1} & 60.36{\scriptsize $\pm$1.3} & 72.26{\scriptsize $\pm$0.1} & 83.84{\scriptsize $\pm$0.2}  & 10.68{\scriptsize $\pm$1.1} & 27.58{\scriptsize $\pm$16.2} &  61.85{\scriptsize $\pm$11.3} & 64.08\\
 \, VAT \cite{miyato2018virtual} & 60.07{\scriptsize $\pm$6.3} & 	93.33{\scriptsize $\pm$6.6} & 97.67{\scriptsize $\pm$0.2} & 65.89{\scriptsize $\pm$1.8} & 75.33{\scriptsize $\pm$0.4} & 83.42{\scriptsize $\pm$0.4} & 20.57{\scriptsize $\pm$4.4} & 65.18{\scriptsize $\pm$7.0} & 80.94{\scriptsize $\pm$1.0} & 71.38\\
 \, UDA \cite{xie2020unsupervised} & 78.76{\scriptsize $\pm$3.6} & 97.92{\scriptsize $\pm$0.2} & 97.96{\scriptsize $\pm$0.1} & 	65.49{\scriptsize $\pm$1.6} & 75.85{\scriptsize $\pm$0.6} & 83.81{\scriptsize $\pm$0.2} & 48.37{\scriptsize $\pm$4.3} & 79.67{\scriptsize $\pm$4.9} & 89.46{\scriptsize $\pm$1.0} & 79.70\\
  \,FixMatch \cite{sohn2020fixmatch} & 66.50{\scriptsize $\pm$15.1} & 97.44{\scriptsize $\pm$0.9} & 97.95{\scriptsize $\pm$0.1} & 65.29{\scriptsize $\pm$1.4} & 75.52{\scriptsize $\pm$0.1} & 83.98{\scriptsize $\pm$0.1} & 40.13{\scriptsize $\pm$3.4} & 77.72{\scriptsize $\pm$4.4} & 88.41{\scriptsize $\pm$1.6} & 76.99 \\
 \, FlexMatch \cite{zhang2021flexmatch} & 	70.54{\scriptsize $\pm$9.6} & 97.78{\scriptsize $\pm$0.3} & 97.88{\scriptsize $\pm$0.2} & 63.76{\scriptsize $\pm$0.9} & 74.01{\scriptsize $\pm$0.5} & 83.72{\scriptsize $\pm$0.2}  & 60.63{\scriptsize $\pm$12.9} & 78.17{\scriptsize $\pm$3.7} & 89.54{\scriptsize $\pm$1.3} & 79.56\\
 \, Dash \cite{xu2021dash} & 74.35{\scriptsize $\pm$4.5}  & 96.63{\scriptsize $\pm$2.0} & 97.90{\scriptsize $\pm$0.3} & 63.33{\scriptsize $\pm$0.4} & 74.54{\scriptsize $\pm$0.2} & 84.01{\scriptsize $\pm$0.2}  & 41.06{\scriptsize $\pm$4.4} & 78.03{\scriptsize $\pm$3.9} & 89.56{\scriptsize $\pm$2.0} & 77.71 \\
 
 \,SoftMatch \cite{chen2023softmatch} & 80.03{\scriptsize $\pm$9.0} & \underline{97.95{\scriptsize $\pm$1.0}} & \underline{98.14{\scriptsize $\pm$0.1}} & 70.57{\scriptsize $\pm$1.0} & 78.21{\scriptsize $\pm$1.0} &\textbf{87.68{\scriptsize $\pm$0.2}}& 65.10{\scriptsize $\pm$9.0} & 83.70{\scriptsize $\pm$4.0}&92.06{\scriptsize $\pm$1.8}& 83.72 \\
\,FreeMatch \cite{wang2022freematch} & 77.21{\scriptsize $\pm$5.0} & \textbf{98.11{\scriptsize $\pm$0.1}} & \textbf{98.19{\scriptsize $\pm$0.1}} & \underline{76.29{\scriptsize $\pm$2.0}} & \underline{79.38{\scriptsize $\pm$0.1}} &\underline{87.61{\scriptsize $\pm$0.2}}& 62.30{\scriptsize $\pm$14.0} & 84.96{\scriptsize $\pm$3.0}&\underline{92.08{\scriptsize $\pm$1.6}} & 84.01 \\ 
\,AdaMatch \cite{berthelot2021adamatch} & 85.15{\scriptsize $\pm$20.4} & 97.94{\scriptsize $\pm$0.1} & 97.92{\scriptsize $\pm$0.1} & 73.61{\scriptsize $\pm$0.1} & 78.59{\scriptsize $\pm$0.4} & 84.49{\scriptsize $\pm$0.1} & 68.17{\scriptsize $\pm$7.7}& 83.50{\scriptsize $\pm$4.2} & 89.25{\scriptsize $\pm$1.5} & 84.29\\
\,InstanT \cite{li2024instant} & \underline{87.32{\scriptsize $\pm$10.2}} & 97.93{\scriptsize $\pm$0.1} & 98.08{\scriptsize $\pm$0.1} & 74.17{\scriptsize $\pm$0.3} & 78.80{\scriptsize $\pm$0.4} & 84.28{\scriptsize $\pm$0.5} &  \textbf{69.39{\scriptsize $\pm$7.4}} & \underline{85.09{\scriptsize $\pm$2.8}} & 89.35{\scriptsize $\pm$1.9} & \underline{84.93} \\ 
\cmidrule(r){1-1}\cmidrule(lr){2-4}\cmidrule(lr){5-7}\cmidrule(lr){8-10}\cmidrule(lr){11-11}
\,LayerMatch (Our) & \textbf{93.94{\scriptsize $\pm$3.9}}&97.65{\scriptsize $\pm$0.1}&97.78{\scriptsize $\pm$0.1}&\textbf{76.95{\scriptsize $\pm$1.4}}&\textbf{83.07{\scriptsize $\pm$1.1}}&87.34{\scriptsize $\pm$0.2}&\underline{68.52{\scriptsize $\pm$8.0}}&\textbf{88.83{\scriptsize $\pm$3.0}}&\textbf{92.25{\scriptsize $\pm$1.2}} & \textbf{87.37}\\
\bottomrule
\end{tabular}
\end{adjustbox}
\end{table}

\begin{table}[t]
\vspace{-.21in}

\centering
\caption{Top-1 accuracy (\%) with training from scratch. The best performance is bold and the second best performance is underlined. All results are averaged with three random seeds \{0,1,2\} and reported with 2-sigma error bar.}
\label{table:main2}

\begin{adjustbox}{width=\columnwidth,center}
\begin{tabular}{@{}c|ccc|ccc|ccc|c}
\toprule Dataset & \multicolumn{3}{c|}{CIFAR-10}& \multicolumn{3}{c|}{CIFAR-100}& \multicolumn{3}{c|}{STL-10} & Average \\ 
\cmidrule(r){1-1}\cmidrule(lr){2-4}\cmidrule(lr){5-7}\cmidrule(lr){8-10}\cmidrule(lr){11-11}
 \,\# Label & \multicolumn{1}{c}{40} & \multicolumn{1}{c}{250} & \multicolumn{1}{c|}{4000} & \multicolumn{1}{c}{400} & \multicolumn{1}{c}{2500}  & \multicolumn{1}{c|}{10000}  & \multicolumn{1}{c}{40} & \multicolumn{1}{c}{250}  & \multicolumn{1}{c|}{1000} & \multicolumn{1}{c}{2053.33}  \\ 
\cmidrule(r){1-1}\cmidrule(lr){2-4}\cmidrule(lr){5-7}\cmidrule(lr){8-10}\cmidrule(lr){11-11}
\, Pseudo-Label \cite{lee2013pseudo} & 25.39{\scriptsize $\pm$0.3} & 53.51{\scriptsize $\pm$2.2} & 84.92{\scriptsize $\pm$0.2} & 12.55{\scriptsize $\pm$0.9} & 42.26{\scriptsize $\pm$0.3} & 63.45{\scriptsize $\pm$0.2} & 25.32{\scriptsize $\pm$1.0} & 44.55{\scriptsize $\pm$2.4} & 67.36{\scriptsize $\pm$0.7} &46.59\\
\, UDA \cite{xie2020unsupervised} & 91.60{\scriptsize $\pm$1.5} & 94.44{\scriptsize $\pm$0.3} & 95.55{\scriptsize $\pm$0.0} & 40.60{\scriptsize $\pm$1.8} & 64.79{\scriptsize $\pm$0.8} & 72.13{\scriptsize $\pm$0.2} & 62.58{\scriptsize $\pm$8.0} & 90.28{\scriptsize $\pm$1.2} & 93.36{\scriptsize $\pm$0.2} &78.37\\
\, FixMatch \cite{sohn2020fixmatch} & 91.45{\scriptsize $\pm$1.7} & {94.87{\scriptsize $\pm$0.1}} & 95.51{\scriptsize $\pm$0.1} & 45.08{\scriptsize $\pm$3.4} & 65.63{\scriptsize $\pm$0.4} & 71.65{\scriptsize $\pm$0.3} & 62.00{\scriptsize $\pm$7.1} & 90.14{\scriptsize $\pm$1.0} & 93.81{\scriptsize $\pm$0.3} &78.90\\
\, FlexMatch \cite{zhang2021flexmatch} & 94.53{\scriptsize $\pm$0.4} & 94.85{\scriptsize $\pm$0.1} & {95.62{\scriptsize $\pm$0.1}} & 47.81{\scriptsize $\pm$1.4} & 66.11{\scriptsize $\pm$0.4} & {72.48{\scriptsize $\pm$0.2}} & 70.81{\scriptsize $\pm$4.2} & 90.21{\scriptsize $\pm$1.1} & 93.55{\scriptsize $\pm$0.4} &80.66\\
\, Dash \cite{xu2021dash} & 84.67{\scriptsize $\pm$4.3} & 94.78{\scriptsize $\pm$0.3} & 95.54{\scriptsize $\pm$0.1} & 45.26{\scriptsize $\pm$2.6} & 65.51{\scriptsize $\pm$0.1} & 72.10{\scriptsize $\pm$0.3} & 65.48{\scriptsize $\pm$4.3} & 90.93{\scriptsize $\pm$0.9} & 93.61{\scriptsize $\pm$0.6} &78.65\\
\, FreeMatch \cite{wang2022freematch} & \textbf{94.98{\scriptsize $\pm$0.1}} & \textbf{95.16{\scriptsize $\pm$0.1}}& \textbf{95.88{\scriptsize $\pm$0.2}}&51.06{\scriptsize $\pm$2.0}&66.59{\scriptsize $\pm$0.2}&\underline{72.93{\scriptsize $\pm$0.1}}&\underline{83.57{\scriptsize $\pm$2.7}}&{91.31{\scriptsize $\pm$0.2}}&\underline{94.15{\scriptsize $\pm$0.2}} & \underline{82.85}\\
\, AdaMatch \cite{berthelot2021adamatch} & 94.64{\scriptsize $\pm$0.0} & 94.76{\scriptsize $\pm$0.1} & 95.46{\scriptsize $\pm$0.1} & {52.02{\scriptsize $\pm$1.7}} & {66.36{\scriptsize $\pm$0.7}} & {72.32{\scriptsize $\pm$0.2}} & {80.05{\scriptsize $\pm$5.2}} & \underline{91.41{\scriptsize $\pm$0.4}} & {93.99{\scriptsize $\pm$0.0}} &{82.33}\\
\, InstanT \cite{li2024instant} & {94.83{\scriptsize $\pm$0.1}} & 94.72{\scriptsize $\pm$0.2} & {95.57{\scriptsize $\pm$0.0}} & \underline{53.94{\scriptsize $\pm$1.8}} & \underline{67.09{\scriptsize $\pm$0.0}} & 72.30{\scriptsize $\pm$0.4} & - & - & - &-\\
\cmidrule(r){1-1}\cmidrule(lr){2-4}\cmidrule(lr){5-7}\cmidrule(lr){8-10}\cmidrule(lr){11-11}
\, LayerMatch (Our) & \underline{94.92{\scriptsize $\pm$0.1}} & \underline{95.03{\scriptsize $\pm$0.1}}& \underline{95.67{\scriptsize $\pm$0.1}}&\textbf{60.87{\scriptsize $\pm$0.7}}&\textbf{73.96{\scriptsize $\pm$0.7}}&\textbf{78.39{\scriptsize $\pm$0.2}}&\textbf{84.07{\scriptsize $\pm$4.0}}&\textbf{92.68{\scriptsize $\pm$0.1}}&\textbf{94.58{\scriptsize $\pm$0.0}}&\textbf{85.57}\\
\bottomrule
\end{tabular}
\end{adjustbox}
\vspace{-.3in}
\end{table}

\textbf{Training from Scratch:} To more comprehensively evaluate LayerMatch, experiment results in 
the training from scratch setting are shown in Table \ref{table:main2}. Table \ref{table:main2} takes the previous state-of-the-art methods, top average performance methods in Table \ref{table:main}, and some classic approaches as benchmarks.

\begin{wraptable}{r}{0.46\textwidth}
\vspace{-.20in}
    \centering
    \caption{Top-1 accuracy (\%) and f-1 score (\%) on ImageNet-100 with 100 labels per class, random seed \{0\}.}
    \vspace{-.095in}
    \label{tab-imagenet}
    \resizebox{0.43\textwidth}{!}{%
    \begin{tabular}{c|cc}
        \toprule
        Methods & Top-1 Acc & F-1 Score \\ \midrule
        \, FixMatch \cite{sohn2020fixmatch}  & 66.24 & 65.59\\
        \, FreeMatch \cite{wang2022freematch} & 65.78 & 65.29\\
        \, AdaMatch \cite{berthelot2021adamatch} & 68.60 & 68.22 \\
        \, InstanT \cite{li2024instant} & 69.94 & 69.72 \\
        \, LayerMatch (Our) & \textbf{72.54} & \textbf{72.12} \\\bottomrule
    \end{tabular}
    }
\vspace{-.25in}
\end{wraptable}

Based on the results in the Table \ref{table:main2}, LayerMatch demonstrates a significant performance improvement.  Overall, LayerMatch achieves an average accuracy improvement of $2.72\%$ across different datasets. Especiall, on CIFAR-100, LayerMatch improves accuracy by $6.93\%$ with 400 labels, $6.87\%$ with 2500 labels, and $5.46\%$ with 10000 labels.

In our evaluation of semi-supervised learning methods on the ImageNet-100 dataset with 100 labels per class, we observed significant variations in performance among the tested methods (Table \ref{tab-imagenet}). Our proposed method, LayerMatch, 
demonstrates superior performance, achieving the highest Top-1 accuracy of $72.54\%$ and an F-1 score of $72.12\%$. This represents a notable improvement over other state-of-the-art techniques. For instance, InstanT, which is 
the second-best performing method, reaches 
a Top-1 accuracy of $69.94\%$ and an F-1 score of $69.72\%$. LayerMatch improves 
the Top-1 accuracy by $2.60\%$ and the F-1 score by $2.40\%$ compared to SOTA method InstanT. These results highlight LayerMatch's superior performance on this challenging dataset.


The superior performance of LayerMatch can be attributed to its innovative approach that optimizes the differential behavior between the feature extraction layers and the classification layer in response to pseudo-labels. This differential strategy enhances the model's accuracy and its ability to generalize from limited labeled data and achieve the SOTA.

\textbf{Limitations:} Although LayerMatch demonstrates superior overall performance, it has shortcomings in certain settings. For example, in Table \ref{table:main}, its performance on CIFAR-10 with 40 and 250 labels is not as good as FreeMatch. The potential reasons for this could be as follows. 
In such cases, the model's accuracy exceeds $97\%$, so the quality of the generated pseudo-labels is extremely high with very little noise and pseudo-labels are also beneficial for the linear classification layer. Therefore, the limitation of LayerMatch is that its design, which directly sets the pseudo-label gradient of the linear classification layer to zero, is relatively simplistic. Future work should focus on designing a method that selects pseudo-labels for self-training different layers based on the learning state of each layer.

\begin{wrapfigure}{r}{0.6\textwidth}
  \centering
  \vspace{-14mm}
  \includegraphics[width=0.6\textwidth]{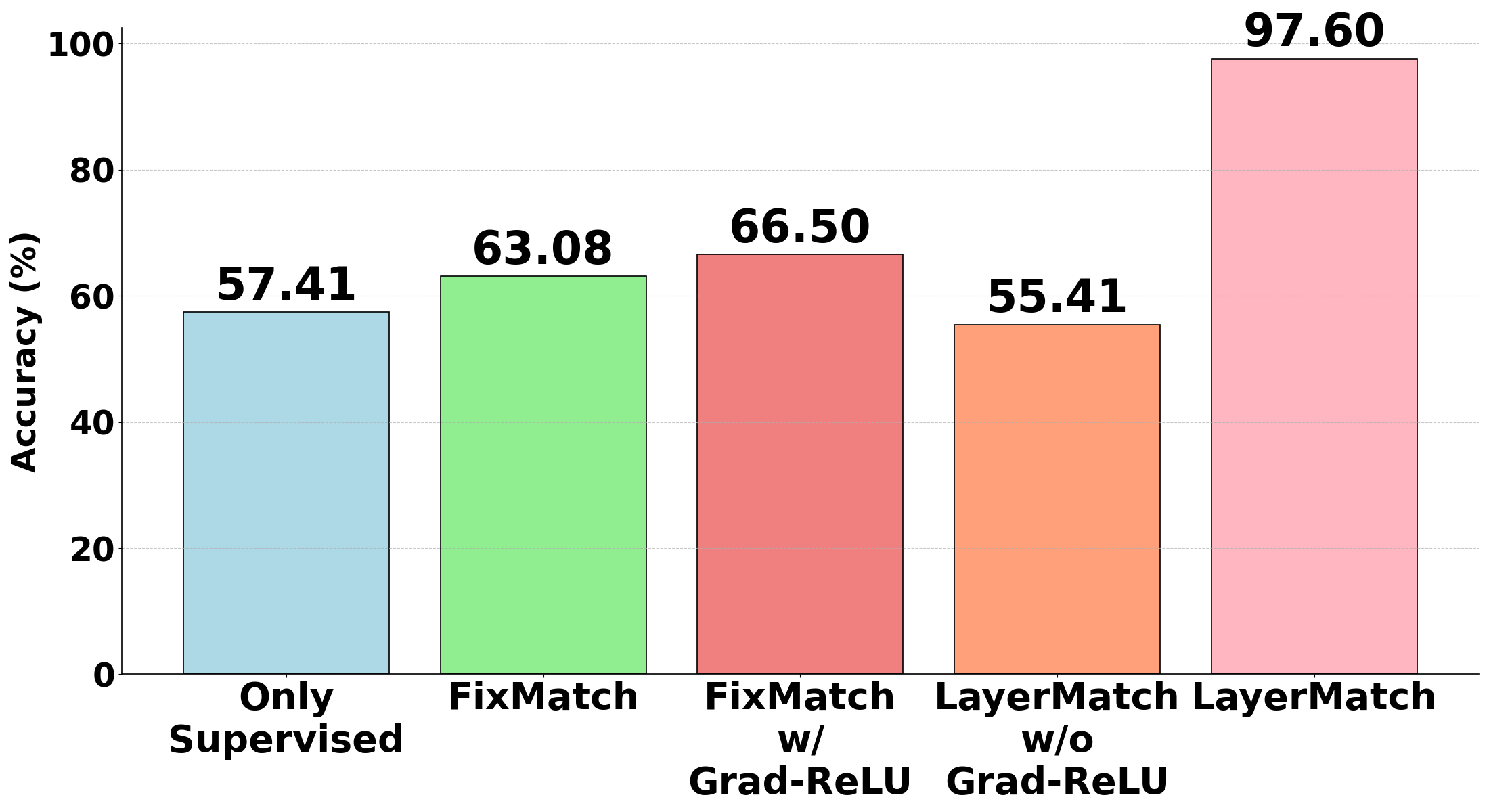}
  \vspace{-6mm}
  \caption{Ablation study of Grad-ReLU on CIFAR-10 (10) with pre-trained ViT.}
  \label{fig:ablation1}
  \vspace{-4mm}
\end{wrapfigure}

\subsection{Ablation Study}

We conduct ablation studies to the individual and combined effects of the Grad-ReLU and Avg-Clustering strategies. 
"CIFAR-10 (10)" represents experimental settings with 10 labels on CIFAR-10.

\begin{wraptable}{r}{0.45\textwidth}
\vspace{-4.5mm}
    \centering
    \caption{Ablation study of Avg-Clustering on CIFAR-10 (10) with pre-trained ViT.}
    \vspace{-2.7mm}
    \label{ablation2}
    \resizebox{0.45\textwidth}{!}{%
    \begin{tabular}{c|cc|c}
        \toprule
        Method  & Grad-ReLU & Avg-Clustering & Acc. (\%)\\ \midrule
       LayerMatch & \xmark & \cmark & 55.41\\
        LayerMatch & \cmark & \xmark & 80.45\\
       LayerMatch  & \cmark & \cmark  &  \textbf{97.60}\\\bottomrule
    \end{tabular}
    }
\end{wraptable}

In Figure \ref{fig:ablation1}, applying the Grad-ReLU strategy alone significantly improves the model’s accuracy compared to FixMatch, which yields 
a Top-1 accuracy of $63.08\%$. Incorporating Grad-ReLU into FixMatch results in an accuracy of $66.50\%$. This shows that pseudo-labeled data can be detrimental to the linear classification layer in FixMatch. When LayerMatch is tested without Grad-ReLU strategies, the accuracy slightly decreases to $55.41\%$. 
It underscores the necessity of our design 
in layer-wise managing less reliable pseudo-labels.

In Table \ref{ablation2}, the most notable outcome is observed 
when both Grad-ReLU and Avg-Clustering were enabled within LayerMatch, which pushed the accuracy to an impressive $97.60\%$. This underscores the synergistic effect of combining these strategies to layer-wisely utilize both labeled and unlabeled data, optimizing the learning process extensively.

In Table \ref{ablation3}, varying the parameter $N$ in Avg-Clustering shows 
that a moderate setting ($N = 2048$) yields the best results, achieving the same peak accuracy of $97.60\%$. Very high ($N = 204800$) or low settings ($N = 1$) led to less 
accuracy, indicating an optimal balance requires 
in the update dynamics to maximize learning efficacy. 

\begin{wraptable}{r}{0.35\textwidth}
\vspace{-4.7mm}
    \centering
    \caption{Ablation study of $N$ of Avg-Clustering on CIFAR-10 (10) with pre-trained ViT.}
    \vspace{-2.7mm}
    \label{ablation3}
    \resizebox{0.28\textwidth}{!}{%
    \begin{tabular}{c|c|c}
        \toprule
        Method  & N  & Acc. (\%) \\ \midrule
        LayerMatch & 1 & 87.46\\
         LayerMatch & 2048 & \textbf{97.60}\\
         LayerMatch & 204800  &  82.45\\\bottomrule
    \end{tabular}
    }
\vspace{-10mm}
\end{wraptable}

These results demonstrate the perfect synergy between the two strategies in LayerMatch. Grad-ReLU, designed specifically for the linear classification layer, and Avg-Clustering, tailored for the feature extraction layer, mutually reinforce each other, highlighting the remarkable effectiveness of the layer-wisely design.


\section{Conclusion}

This paper revisits the role of pseudo-labels in different layers, emphasizing that the feature extraction layer learns clustering features from pseudo-labels.
However, the linear classification layer can be misled by poorly clustered pseudo-labeled data.
To address these issues, the LayerMatch method is introduced 
, which aligns pseudo-label usage with the characteristics of each layer. 
Within LayerMatch, the Grad-ReLU strategy is employed to prevent the negative impact of pseudo-labels on the classification layer.
Moreover, the Avg-Clustering strategy is used to stabilize the clustering centers in the feature extraction layer.
This work aims to inspire researchers to design 
layer-wisely adaptive pseudo-label algorithms.

\bibliography{main}
\bibliographystyle{plain}
\newpage
\appendix
\setcounter{lemma}{0}
\renewcommand{\thelemma}{4.1}
\section{Proof of Lemma 4.1.}
\label{Proof_Lemma}
\begin{lemma}
Equation (\ref{LU_se}) leads to a simplified integral expression for consistency regularization loss function:
\begin{align*}
\mathcal{L}_u = \iint\limits_{\mathcal{D}} \|\nabla_\mathbf{x} \mathcal{P}_{\boldsymbol{\mathrm{\Theta_t}}, \boldsymbol{\beta_t}}\|_1 \, dV,
\end{align*}
where $\mathcal{D}$ represents the continuous input data space spanned by all unlabeled data under infinite data augmentation, and $\nabla_\mathbf{x}$ represents the gradient operator with respect to the input $\mathbf{x}$.
\end{lemma}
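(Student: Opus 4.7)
The plan is to realize the finite sum in Equation~(\ref{LU_se}) as a double Riemann-type average (over unlabeled samples and over augmentation perturbations) and to reduce the finite-difference quantity $\|\mathcal{P}(\mathbf{x}+\boldsymbol{\Delta}\mathbf{x})-\mathcal{P}(\mathbf{x})\|_1$ to an infinitesimal form via a first-order Taylor expansion; in the joint limit of dense unlabeled data and arbitrarily fine perturbations, what remains is precisely $\|\nabla_\mathbf{x}\mathcal{P}_{\boldsymbol{\mathrm{\Theta_t}},\boldsymbol{\beta_t}}\|_1$ integrated against the volume element on $\mathcal{D}$.

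First I would handle the outer sum. Under the excerpt's hypothesis that $\mathcal{D}$ is bounded and compact and that a substantial number of unlabeled points $\{\mathbf{x}_i^u\}$ are drawn so as to fill $\mathcal{D}$, the average $\frac{1}{|\mathcal{D}_\tau|}\sum_i f(\mathbf{x}_i^u)$ converges to $\iint_{\mathcal{D}} f(\mathbf{x})\,dV$ for any continuous integrand $f$, by the standard Riemann-sum argument on a compact domain (or equivalently by the SLLN against the uniform sampling measure, up to an absorbed normalization). The task therefore reduces to identifying the effective per-point integrand.

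Next I would linearize the inner finite difference. Because $\mathcal{M}_{\boldsymbol{\mathrm{\Theta_t}}}$ has first-order partials in each component of $\mathbf{x}$ and is composed with the smooth sigmoid in Equation~(\ref{p_Theta}), $\mathcal{P}_{\boldsymbol{\mathrm{\Theta_t}},\boldsymbol{\beta_t}}(\mathbf{x})$ is differentiable. For a small perturbation $\boldsymbol{\Delta}\mathbf{x}=(\Delta x_1,\dots,\Delta x_n)$, Taylor's theorem gives
\begin{equation*}
\mathcal{P}(\mathbf{x}+\boldsymbol{\Delta}\mathbf{x})-\mathcal{P}(\mathbf{x}) = \nabla_\mathbf{x}\mathcal{P}(\mathbf{x})\cdot\boldsymbol{\Delta}\mathbf{x} + o(\|\boldsymbol{\Delta}\mathbf{x}\|).
\end{equation*}
In the binary setting $\mathcal{P}$ is scalar, so the left-hand $L_1$ norm is simply the absolute value of this expression. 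I would then invoke the independence-of-dimensions hypothesis to organise the "various data perturbations'' into axis-aligned families $\{\pm\epsilon\,\mathbf{e}_j\}_{j=1}^n$; summing (or averaging) the linearised contributions over such a family produces
\begin{equation*}
\sum_{j=1}^{n} |\partial_j \mathcal{P}(\mathbf{x})|\cdot\epsilon \;=\; \epsilon\,\|\nabla_\mathbf{x}\mathcal{P}(\mathbf{x})\|_1,
\end{equation*}
and the scale $\epsilon$ (together with the perturbation-family normalisation) is absorbed into the infinitesimal $dV$. Substituting this per-point quantity into the outer Riemann limit yields the claimed integral.

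The main obstacle I foresee is the bridge from $|\nabla_\mathbf{x}\mathcal{P}\cdot\boldsymbol{\Delta}\mathbf{x}|$, a scalar absolute value, to $\|\nabla_\mathbf{x}\mathcal{P}\|_1$, an $L_1$ norm of a vector: this is not a triviality of Taylor expansion but is exactly where the coordinate-independence hypothesis must be used, since for generic (e.g.\ isotropic) perturbations the expectation of $|\nabla\mathcal{P}\cdot\boldsymbol{\Delta}\mathbf{x}|$ would give an $L_2$-type quantity, not $\|\cdot\|_1$. The ancillary technicalities — uniform control of the Taylor remainder on the compact $\mathcal{D}$ and the interchange of the perturbation limit with the outer Riemann limit — are routine consequences of compactness and smoothness, but must be sequenced so that the perturbation scale $\epsilon$ is not double-counted against the sample density of the unlabeled points.
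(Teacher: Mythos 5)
Your overall strategy is the same as the paper's---finite difference, first-order linearization (the paper uses the Mean Value Theorem where you use Taylor), then a Riemann-sum limit over $\mathcal{D}$---but you are noticeably more careful at the one step where the paper's argument is actually shaky, and you deserve credit for that. The paper partitions $\mathcal{D}$ into hypercubes of side $h$, takes a \emph{single} perturbation vector $\boldsymbol{\Delta}\mathbf{x}$ with all components equal to $h$, applies the MVT to get $\nabla\mathcal{P}(\mathbf{x}_i)\cdot\boldsymbol{\Delta}\mathbf{x}$, and then asserts $\|\nabla\mathcal{P}(\mathbf{x}_i)\cdot\boldsymbol{\Delta}\mathbf{x}\|_1 = h\,\|\nabla\mathcal{P}(\mathbf{x}_i)\|_1$. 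But since $\mathcal{P}$ is scalar, that inner product is a number and its ``$L_1$ norm'' is $h\,\bigl|\sum_j \partial_j\mathcal{P}\bigr|$, which generally undershoots $h\sum_j |\partial_j\mathcal{P}|$ when the partials have mixed signs---precisely the scalar-absolute-value-versus-vector-$L_1$-norm gap you called out. Your repair (a family of axis-aligned perturbations $\{\pm\epsilon\,\mathbf{e}_j\}$, each direction's linearized contribution picked up by the absolute value before summing, so that the inner average genuinely yields $\epsilon\,\|\nabla\mathcal{P}\|_1$) is what the coordinate-independence hypothesis is in fact being used for, and it supplies the justification the paper's one-line assertion omits. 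Conversely, the remaining technicalities you flag but do not resolve---the $h$ versus $h^n$ (or, in your framing, $\epsilon$ versus sample density) bookkeeping in the Riemann normalization and the order of the two limits---are equally hand-waved in the paper (its $\frac{1}{|\mathcal{D}_\tau|}$ factor simply vanishes between lines), so you are no worse off there. Net assessment: same route, but your version correctly identifies and plugs the hole in the key identity.
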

\begin{proof}
By the Mean Value Theorem, we have
\begin{align*}
\mathcal{P}_{\boldsymbol{\mathrm{\Theta_t}}, \boldsymbol{\beta_t}}(\mathbf{x}+\boldsymbol{\Delta} \mathbf{x})-\mathcal{P}_{\boldsymbol{\mathrm{\Theta_t}}, \boldsymbol{\beta_t}}(\mathbf{x}) = \nabla \mathcal{P}_{\boldsymbol{\mathrm{\Theta_t}}, \boldsymbol{\beta_t}}(\mathrm{\xi}) \cdot \boldsymbol{\Delta} \mathbf{x},
\end{align*}
where $\mathrm{\xi}$ is a point on the line segment between $\mathbf{x}$ and $\mathbf{x}+\boldsymbol{\Delta} \mathbf{x}$.

When $\boldsymbol{\Delta} \mathbf{x}$ is sufficiently small, we have the approximation
\begin{align*}
\mathcal{P}_{\boldsymbol{\mathrm{\Theta_t}}, \boldsymbol{\beta_t}}(\mathbf{x}+\boldsymbol{\Delta} \mathbf{x})-\mathcal{P}_{\boldsymbol{\mathrm{\Theta_t}}, \boldsymbol{\beta_t}}(\mathbf{x}) = \nabla \mathcal{P}_{\boldsymbol{\mathrm{\Theta_t}}, \boldsymbol{\beta_t}}(\mathbf{x}) \cdot \boldsymbol{\Delta} \mathbf{x}.
\end{align*}
Partition the unlabeled data space $\mathcal{D}$ into small hypercubes with equidistant intervals $\Delta x$, and consider the vertices of these cubes as the set $\mathcal{D}_{\tau}$. Using the equation above, we can reformulate Equation \ref{LU_se} as follows:
\begin{align*}
\mathcal{L}_u &= \frac{1}{\left| \mathcal{D}_{\tau} \right|}\sum_{i=1}^{|\mathcal{D}_{\tau}|} \left\| \mathcal{P}_{\boldsymbol{\mathrm{\Theta_t}}, \boldsymbol{\beta_t}}(\mathbf{x}_{i}^u + \boldsymbol{\Delta} \mathbf{x}) - \mathcal{P}_{\boldsymbol{\mathrm{\Theta_t}}, \boldsymbol{\beta_t}}(\mathbf{x}_{i}^u) \right\|_1\\
&=\frac{1}{\left| \mathcal{D}_{\tau} \right|}\sum_{i=1}^{|\mathcal{D}_{\tau}|} \left\| \nabla\mathcal{P}_{\boldsymbol{\mathrm{\Theta_t}}, \boldsymbol{\beta_t}}(\mathbf{x}_{i}^u)  \cdot \boldsymbol{\Delta} \mathbf{x} \right\|_1\\
&=h\cdot\sum_{i=1}^{|\mathcal{D}_{\tau}|}||\nabla\mathcal{P}_{\boldsymbol{\mathrm{\Theta_t}}, \boldsymbol{\beta_t}}(\mathbf{x}_{i}^u)||_1,
\end{align*}
Here, $h=\boldsymbol{\Delta} \mathbf{x}_1=\boldsymbol{\Delta} \mathbf{x}_2=\dots=\boldsymbol{\Delta} \mathbf{x}_n$. When $h \to 0$,  we can continue with:
\begin{align*}
\mathcal{L}_u &= \lim\limits_{h \to 0} h\cdot\sum_{i=1}^{|\mathcal{D}_{\tau}|}||\nabla\mathcal{P}_{\boldsymbol{\mathrm{\Theta_t}}, \boldsymbol{\beta_t}}(\mathbf{x}_{i}^u)||_1=\iint\limits_{\mathcal{D}} \|\nabla_\mathbf{x} \mathcal{P}_{\boldsymbol{\mathrm{\Theta_t}}, \boldsymbol{\beta_t}}\|_1 \, dV.
\end{align*}
Hence we proved Lemma 4.1.
\end{proof}
\setcounter{theorem}{0}
\renewcommand{\thetheorem}{4.2}
\section{Proof of Theorem 4.2.}
\label{Proof_Theorem}
\begin{theorem}
\(\lim_{t \to +\infty} \mathcal{L}_u = 0\)
implies finding a sequence of points \(\{\Theta_t, \boldsymbol{\beta}_t\}_{t=0}^{+\infty}\). For any $\epsilon > 0$, there exists a sequence $\{\delta_t\}_{t=0}^{+\infty}$, where $\delta_t \in [0,1]$ and $\lim_{t \to +\infty}\delta_t = 0$. There exists a $T > 0$ such that if $t > T$, the following holds:
\begin{align*}
P_\mathbf{x}\Big[\mathcal{P}_{\boldsymbol{\mathrm{\Theta_t}}, \boldsymbol{\beta_t}}\big(1 - \mathcal{P}_{\boldsymbol{\mathrm{\Theta_t}}, \boldsymbol{\beta_t}}\big)\big|\big|\boldsymbol{\beta}_t  \nabla_\mathbf{x} \mathcal{M}_{\Theta_t}\big|\big|_1 < \epsilon \Big] > 1 - \delta_t,
\end{align*}

where $P_\mathbf{x}[\cdot]$ is the probability measure over the input data space $\mathcal{D}$. 
\end{theorem}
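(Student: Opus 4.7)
The plan is to reduce the statement to an application of Markov's inequality, after first making the integrand in Lemma 4.1 explicit. First I would compute $\nabla_\mathbf{x} \mathcal{P}_{\boldsymbol{\mathrm{\Theta_t}}, \boldsymbol{\beta_t}}$ directly from Equation (3). Writing $z_t(\mathbf{x}) = \boldsymbol{\beta}_t^{T}\mathcal{M}_{\Theta_t}(\mathbf{x})$, the sigmoid derivative gives $\frac{d\mathcal{P}}{dz} = -\mathcal{P}(1-\mathcal{P})$, and then the chain rule yields
\begin{equation*}
\nabla_\mathbf{x}\mathcal{P}_{\boldsymbol{\mathrm{\Theta_t}}, \boldsymbol{\beta_t}} \;=\; -\mathcal{P}_{\boldsymbol{\mathrm{\Theta_t}}, \boldsymbol{\beta_t}}\bigl(1-\mathcal{P}_{\boldsymbol{\mathrm{\Theta_t}}, \boldsymbol{\beta_t}}\bigr)\,\boldsymbol{\beta}_t\,\nabla_\mathbf{x}\mathcal{M}_{\Theta_t}.
\end{equation*}
Since $\mathcal{P}(1-\mathcal{P}) \ge 0$ is a scalar, taking the $L_1$ norm simply pulls it out, so the integrand of Lemma 4.1 equals $\mathcal{P}_{\boldsymbol{\mathrm{\Theta_t}}, \boldsymbol{\beta_t}}(1-\mathcal{P}_{\boldsymbol{\mathrm{\Theta_t}}, \boldsymbol{\beta_t}})\,\|\boldsymbol{\beta}_t\,\nabla_\mathbf{x}\mathcal{M}_{\Theta_t}\|_1$. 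That is exactly the non-negative quantity $f_t(\mathbf{x})$ that appears inside the probability in the theorem.

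Next I would upgrade this pointwise non-negative integrand into a probabilistic tail bound. Because $\mathcal{D}$ is bounded and compact, its Lebesgue measure $|\mathcal{D}|$ is finite and positive, so the normalized Lebesgue measure $P_\mathbf{x}(\cdot) := \frac{1}{|\mathcal{D}|}\iint_{\cdot}dV$ is a legitimate probability measure on $\mathcal{D}$. Under this measure,
\begin{equation*}
\mathbb{E}_{\mathbf{x}}[f_t] \;=\; \frac{1}{|\mathcal{D}|}\,\iint_\mathcal{D} f_t\, dV \;=\; \frac{\mathcal{L}_u(t)}{|\mathcal{D}|}.
\end{equation*}
By Lemma 4.1 and the hypothesis $\lim_{t\to\infty}\mathcal{L}_u = 0$, this expectation tends to zero. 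Markov's inequality then gives, for any $\epsilon > 0$,
\begin{equation*}
P_\mathbf{x}[\,f_t(\mathbf{x}) \ge \epsilon\,] \;\le\; \frac{\mathbb{E}_\mathbf{x}[f_t]}{\epsilon} \;=\; \frac{\mathcal{L}_u(t)}{\epsilon\,|\mathcal{D}|}.
\end{equation*}

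Finally I would assemble the $\delta_t$ sequence. Define $\delta_t := \min\!\bigl\{1,\;\mathcal{L}_u(t)/(\epsilon|\mathcal{D}|)\bigr\}$; then $\delta_t\in[0,1]$ by construction, and $\delta_t \to 0$ because $\mathcal{L}_u(t)\to 0$. Choosing $T$ so that $\mathcal{L}_u(t) < \epsilon|\mathcal{D}|$ whenever $t>T$, the Markov bound rearranges to the claim $P_\mathbf{x}[f_t < \epsilon] > 1-\delta_t$, which is the statement of the theorem. The only real subtlety, and the step I would be most careful about, is specifying the probability measure on $\mathcal{D}$: the theorem statement leaves $P_\mathbf{x}[\cdot]$ implicit, but combining the bounded/compact assumption on $\mathcal{D}$ with the normalized Lebesgue measure makes the reduction from $\mathcal{L}_u$ to a probabilistic tail rigorous; the remaining calculations are then just the chain rule and Markov's inequality.
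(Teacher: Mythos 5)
Your proof is correct and, in a small but real way, takes a cleaner route than the paper's. Both arguments share the same skeleton: apply the chain rule to write $\nabla_\mathbf{x}\mathcal{P} = \pm\,\mathcal{P}(1-\mathcal{P})\,\boldsymbol{\beta}_t\,\nabla_\mathbf{x}\mathcal{M}_{\Theta_t}$ (you correctly note the minus sign, which the paper drops, though it is irrelevant once the $L_1$ norm is taken), identify the integrand of Lemma 4.1 with the quantity $f_t$ inside the probability, and invoke $\mathcal{L}_u(t)\to 0$. The difference is in how the integral statement is converted into a probabilistic tail. The paper asserts, ``by the fundamental theorem of calculus,'' that $\lim_{t\to\infty}P_\mathbf{x}\big[\|\nabla_\mathbf{x}\mathcal{P}\|_1 = 0\big] = 1$ and then appeals to the definition of a limit. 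Read literally, that intermediate claim is too strong: $\iint_\mathcal{D} f_t\,dV\to 0$ with $f_t\ge 0$ does not force the measure of $\{f_t = 0\}$ to tend to $1$ (consider $f_t\equiv 1/t$). What is true, and what the theorem actually asserts, is convergence in measure, and that is exactly what Markov's inequality delivers. Your route therefore avoids the paper's shaky intermediate step, gives an explicit $\delta_t = \min\{1,\,\mathcal{L}_u(t)/(\epsilon|\mathcal{D}|)\}$ rather than a bare existence claim, and makes visible the one hypothesis the paper leaves implicit, namely that $P_\mathbf{x}$ must be a fixed probability measure on $\mathcal{D}$ compatible with the integral in Lemma 4.1 — which you supply via the boundedness of $\mathcal{D}$ and normalized Lebesgue measure. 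The two proofs reach the same conclusion, but yours is the one that would survive a careful referee.
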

\begin{proof}
We have $\lim_{t \to +\infty} \mathcal{L}_u = 0$ and $\|\nabla_\mathbf{x} \mathcal{P}_{\boldsymbol{\mathrm{\Theta_t}}, \boldsymbol{\beta_t}}\|_1 \geq 0$. By the fundamental theorem of calculus, we know:
\begin{align*}
\lim_{t \to +\infty} P_\mathbf{x}\big[\|\nabla_\mathbf{x} \mathcal{P}_{\boldsymbol{\mathrm{\Theta_t}}, \boldsymbol{\beta_t}}\|_1=0\big] = 1.
\end{align*}

By the chain rule of differentiation, we have
\begin{align*}
\nabla_\mathbf{x} \mathcal{P}_{\boldsymbol{\mathrm{\Theta_t}}, \boldsymbol{\beta_t}} = \mathcal{P}_{\boldsymbol{\mathrm{\Theta_t}}, \boldsymbol{\beta_t}}\big(1 - \mathcal{P}_{\boldsymbol{\mathrm{\Theta_t}}, \boldsymbol{\beta_t}}\big)\boldsymbol{\beta}_t  \nabla_\mathbf{x} \mathcal{M}_{\Theta_t}.
\end{align*}
Thus, we have
\begin{align*}
\lim_{t \to +\infty} P_\mathbf{x}\Big[\mathcal{P}_{\boldsymbol{\mathrm{\Theta_t}}, \boldsymbol{\beta_t}}\big(1 - \mathcal{P}_{\boldsymbol{\mathrm{\Theta_t}}, \boldsymbol{\beta_t}}\big)||\boldsymbol{\beta}_t  \nabla_\mathbf{x} \mathcal{M}_{\Theta_t}||_1=0\Big] = 1.
\end{align*}
According to the definition of the limit, the above statement is equivalently expressed as:

For any $\epsilon > 0$, there exists a sequence $\{\delta_t\}_{t=0}^{+\infty}$, where $\delta_t \in [0,1]$ and $\lim_{t \to +\infty}\delta_t = 0$. There exists a $T > 0$ such that if $t > T$, the following holds:
\begin{align*}
P_\mathbf{x}\Big[\mathcal{P}_{\boldsymbol{\mathrm{\Theta_t}}, \boldsymbol{\beta_t}}\big(1 - \mathcal{P}_{\boldsymbol{\mathrm{\Theta_t}}, \boldsymbol{\beta_t}}\big)\big|\big|\boldsymbol{\beta}_t  \nabla_\mathbf{x} \mathcal{M}_{\Theta_t}\big|\big|_1 < \epsilon \Big] > 1 - \delta_t.
\end{align*}
Hence we proved Theorem 4.2.
\end{proof}

\section{Experiments Compute Resources}
\label{Resources}
The experiments for CIFAR-10/100 and STL-10 are conducted on a single A100-40G GPU. The experiments for ImageNet-100 are conducted on four A100-40G GPUs. 31 A100-40G days are spend on fine-tuning pre-trained ViT setting. 175 A100-40G days are spend on training from scratch setting. The initial exploratory experiments are conducted on CIFAR-10, which are not resource-intensive, taking 8 A100-40G days.

\section{Broader Impacts}
This paper is foundational research on semi-supervised classification problems and does not involve specific applications. In the foreseeable future, the impact is expected to be limited to the field of machine learning theoretical research, aiming to advance the understanding and innovative design of methods for semi-supervised classification problems. Currently, no negative impacts have been observed.

\section{Hyperparameter setting}
The main experiment for fine-tuning pre-trained ViT in Table \ref{table:main} and all ablation experiments used the hyperparameters from Table \ref{vith1}. The experiment for training from scratch in Table \ref{table:main2} and Table \ref{tab-imagenet} utilized the hyperparameters from Table \ref{h2}.
\label{Hyperparameter}

\begin{table}[!htbp]
\centering
\caption{Hyper-parameters of experiments using pre-trained ViT.}
\resizebox{0.75\textwidth}{!}{
\begin{tabular}{cccc}\toprule
Dataset & CIFAR-10 & CIFAR-100 & STL-10 \\\cmidrule(r){1-1} \cmidrule(lr){2-2}\cmidrule(lr){3-3}\cmidrule(l){4-4}
Model    &  ViT-T-P2-32 & ViT-S-P4-32 & ViT-B-P16-96\\\cmidrule(r){1-1} \cmidrule(l){2-4}
Weight Decay&  \multicolumn{3}{c}{5e-4}\\\cmidrule(r){1-1} \cmidrule(l){2-4}
Labeled Batch size & \multicolumn{3}{c}{8}\\\cmidrule(r){1-1} \cmidrule(l){2-4}
Unlabeled Batch size & \multicolumn{3}{c}{8}\\\cmidrule(r){1-1} \cmidrule(l){2-4}
Learning Rate & 5e-4 & 5e-4 & 1e-4 \\\cmidrule(r){1-1} \cmidrule(l){2-4}
Layer Decay Rate & 0.5 & 0.5 & 0.95 \\\cmidrule(r){1-1} \cmidrule(l){2-4}
Scheduler & \multicolumn{3}{c}{$\eta = \eta_0 \cos(\frac{7\pi k}{16K})$} \\ \cmidrule(r){1-1} \cmidrule(l){2-4}
AdamW momentum & \multicolumn{3}{c}{0.9}\\\cmidrule(r){1-1} \cmidrule(l){2-4}
Model EMA Momentum & \multicolumn{3}{c}{0.0}\\\cmidrule(r){1-1} \cmidrule(l){2-4}
Prediction EMA Momentum & \multicolumn{3}{c}{0.999}\\\cmidrule(r){1-1} \cmidrule(l){2-4}
Threshold & \multicolumn{3}{c}{Self-adaptive Thresholding \cite{wang2022freematch}} \\\cmidrule(r){1-1} \cmidrule(l){2-4}
Weak Augmentation & \multicolumn{3}{c}{Random Crop, Random Horizontal Flip} \\\cmidrule(r){1-1} \cmidrule(l){2-4}
Strong Augmentation & \multicolumn{3}{c}{RandAugment \cite{cubuk2020randaugment}} \\
\bottomrule
\end{tabular}
}
\label{vith1}
\end{table}
\newpage
\begin{table}[!htbp]
\centering
\caption{Hyper-parameters of experiments training from scratch.}
\label{h2}
\begin{adjustbox}{width=0.85\columnwidth, center}
\begin{tabular}{cccccc}\toprule
Dataset &  CIFAR-10 & CIFAR-100 & STL-10 & ImageNet-100 \\\cmidrule(r){1-1} \cmidrule(lr){2-2}\cmidrule(lr){3-3}\cmidrule(lr){4-4}\cmidrule(l){5-5}
Model    &  WRN-28-2 & WRN-28-8 & WRN-37-2 & ResNet-50 \\\cmidrule(r){1-1} \cmidrule(l){2-5}
Weight Decay&  5e-4  & 1e-3 & 5e-4 & 3e-4\\\cmidrule(r){1-1} \cmidrule(l){2-5}
Unlabeled Batch size & \multicolumn{4}{c}{448}\\\cmidrule(r){1-1} \cmidrule(l){2-5}
Learning Rate & \multicolumn{4}{c}{0.03}\\\cmidrule(r){1-1} \cmidrule(l){2-5}
Scheduler & \multicolumn{4}{c}{$\eta = \eta_0 \cos(\frac{7\pi k}{16K})$} \\\cmidrule(r){1-1} \cmidrule(l){2-5}
SGD Momentum & \multicolumn{4}{c}{0.9}\\\cmidrule(r){1-1} \cmidrule(l){2-5}
Model EMA Momentum & \multicolumn{4}{c}{0.999}\\\cmidrule(r){1-1} \cmidrule(l){2-5}
Prediction EMA Momentum & \multicolumn{4}{c}{0.999}\\\cmidrule(r){1-1} \cmidrule(l){2-5}
Threshold & \multicolumn{4}{c}{Self-adaptive Thresholding \cite{wang2022freematch}} \\\cmidrule(r){1-1} \cmidrule(l){2-5}
Weak Augmentation & \multicolumn{4}{c}{Random Crop, Random Horizontal Flip}\\\cmidrule(r){1-1} \cmidrule(l){2-5}
Strong Augmentation & \multicolumn{4}{c}{RandAugment \cite{cubuk2020randaugment}} \\
\bottomrule
\end{tabular}
\end{adjustbox}
\end{table}

\end{document}